\newcommand{\wjsay}[1]{}
\newcommand{\mvsay}[1]{}
\newcommand{\ourtitle}{ABCDP: Approximate Bayesian Computation\\with Differential Privacy}
\title{\ourtitle}
\author{Mijung Park$^{1,2}$ \hspace{8mm} Margarita Vinaroz$^{1,2}$ \hspace{8mm} Wittawat Jitkrittum$^{1,3}$ \\[3mm]
    \url{mijung.park@tuebingen.mpg.de} \\
    \url{mvinaroz@tuebingen.mpg.de} \\
    \url{wittawatj@gmail.com} \\[6mm]
$^1$Max Planck Institute for Intelligent Systems, T\"{u}bingen, Germany \\[1mm]
$^2$Department of Computer Science, University of T\"{u}bingen  \\[1mm]
$^3$Google Research  }
\begin{document}
\raggedbottom
\maketitle

\newcommand{\punt}[1]{}

\newtheorem{thm}{Theorem}[section]
\newtheorem{lem}{Lemma}[section]
\newtheorem{cor}[thm]{Corollary}
\newtheorem{prop}[thm]{Proposition}
\newtheorem{defn}{Definition}[section]

\newtheorem{remark}{Remark}

\newtheorem{condition}{Condition}
\def\argmax{\mathop{\rm arg\,max}}
\def\argmin{\mathop{\rm arg\,min}}
\newcommand{\transpose}{{\scriptscriptstyle T}}
\newcommand{\trp}{{^\top}} 
\newcommand{\reals}{\mathbb{R}}
\newcommand{\biggram}{\overrightarrow{\mathbf{K}}}
\newcommand{\biggramsep}{{\vK \otimes \vL}}
\newcommand{\veckernel}{\overrightarrow{k}}
\newcommand{\chvec}{\ch_{\veckernel}}
\newcommand{\chvecsep}{\ch_{k\vL}}
\newcommand{\chdict}{\ch_{\cd}}
\newcommand{\expect}{\mathbb{E}}
\newcommand{\integers}{\mathbf{Z}}
\newcommand{\naturals}{\mathbf{N}}
\newcommand{\rationals}{\mathbf{Q}}

\newcommand{\ca}{\mathcal{A}}
\newcommand{\cb}{\mathcal{B}}
\newcommand{\cc}{\mathcal{C}}
\newcommand{\cd}{\mathcal{D}}
\newcommand{\ce}{\mathcal{E}}
\newcommand{\cf}{\mathcal{F}}
\newcommand{\cg}{\mathcal{G}}
\newcommand{\ch}{\mathcal{H}}
\newcommand{\ci}{\mathcal{I}}
\newcommand{\cj}{\mathcal{J}}
\newcommand{\ck}{\mathcal{K}}
\newcommand{\cl}{\mathcal{L}}
\newcommand{\cm}{\mathcal{M}}
\newcommand{\cn}{\mathcal{N}}
\newcommand{\co}{\mathcal{O}}
\newcommand{\cp}{\mathcal{P}}
\newcommand{\cq}{\mathcal{Q}}
\newcommand{\calr}{\mathcal{R}}
\newcommand{\cs}{\mathcal{S}}
\newcommand{\ct}{\mathcal{T}}
\newcommand{\cu}{\mathcal{U}}
\newcommand{\cv}{\mathcal{V}}
\newcommand{\cw}{\mathcal{W}}
\newcommand{\cx}{\mathcal{X}}
\newcommand{\cy}{\mathcal{Y}}
\newcommand{\cz}{\mathcal{Z}}
\newcommand{\cS}{\mathcal{S}}
\newcommand{\pr}{\mathbb{P}}
\newcommand{\predsp}{\cy}  
\newcommand{\outsp}{\cy}

\newcommand{\prxy}{P_{\cx \times \cy}}
\newcommand{\prx}{P_{\cx}}
\newcommand{\prygivenx}{P_{\cy\mid\cx}}
\newcommand{\ex}{\mathbb{E}}
\newcommand{\cov}{\textrm{Cov}}
\newcommand{\kl}{\textrm{KL}}
\newcommand{\law}{\mathcal{L}}
\newcommand{\as}{\textrm{ a.s.}}
\newcommand{\io}{\textrm{ i.o.}}
\newcommand{\ev}{\textrm{ ev.}}
\newcommand{\convd}{\stackrel{d}{\to}}
\newcommand{\eqd}{\stackrel{d}{=}}
\newcommand{\del}{\nabla}
\newcommand{\loss}{V}
\newcommand{\risk}{R}
\newcommand{\emprisk}{\hat{R}_{\ell}}
\newcommand{\lossfnl}{\risk}
\newcommand{\emplossfnl}{\emprisk}
\newcommand{\empminimizer}[1]{\hat{#1}_{\ell}} 
\newcommand{\empminimizerfa}{\hat{f}_{\ell}^{1}}
\newcommand{\empminimizerfb}{\hat{f}_{\ell}^{2}}
\newcommand{\minimizer}[1]{#1_{*}} 
\newcommand{\etal}{\textrm{et. al.}}
\newcommand{\rademacher}[1]{\calr_{#1}}
\newcommand{\emprademacher}[1]{\hat{\calr}_{#1}}

\newcommand{\trace}{\operatorname{trace}}
\newcommand{\rank}{\text{rank}}
\newcommand{\linspan}{\text{span}}
\newcommand{\spn}{\text{span}}
\newcommand{\proj}{\text{Proj}}
\def\argmax{\mathop{\rm arg\,max}}
\def\argmin{\mathop{\rm arg\,min}}

\newcommand{\bfx}{\mathbf{x}}
\newcommand{\bfy}{\mathbf{y}}
\newcommand{\bfl}{\bm{\lambda}}
\newcommand{\bfm}{\mathbf{\mu}}
\newcommand{\calL}{\mathcal{L}}

\newcommand{\vX}{\mathbf{X}}
\newcommand{\vY}{\mathbf{Y}}
\newcommand{\vA}{\mathbf{A}}
\newcommand{\vB}{\mathbf{B}}
\newcommand{\vE}{\mathbf{E}}
\newcommand{\vK}{\mathbf{K}}
\newcommand{\vD}{\mathbf{D}}
\newcommand{\vU}{\mathbf{U}}
\newcommand{\vL}{\mathbf{L}}
\newcommand{\vI}{\mathbf{I}}
\newcommand{\vC}{\mathbf{C}}
\newcommand{\vV}{\mathbf{V}}
\newcommand{\vM}{\mathbf{M}}
\newcommand{\vN}{\mathbf{N}}
\newcommand{\vQ}{\mathbf{Q}}
\newcommand{\vR}{\mathbf{R}}
\newcommand{\vS}{\mathbf{S}}
\newcommand{\vT}{\mathbf{T}}
\newcommand{\vZ}{\mathbf{Z}}
\newcommand{\vW}{\mathbf{W}}
\newcommand{\vH}{\mathbf{H}}
\newcommand{\vsig}{\bm{\Sigma}}
\newcommand{\vLam}{\bm{\Lambda}}
\newcommand{\vLambda}{\bm{\Lambda}}
\newcommand{\vlam}{\bm{\lambda}}

\newcommand{\vw}{\mathbf{w}}
\newcommand{\vx}{\mathbf{x}}
\newcommand{\vxi}{\bm{\xi}}     
\newcommand{\valpha}{\bm{\alpha}}
\newcommand{\vbeta}{\bm{\beta}}
\newcommand{\veta}{\bm{\eta}}
\newcommand{\vsigma}{\bm{\sigma}}
\newcommand{\vepsilon}{\bm{\epsilon}}
\newcommand{\vdelta}{\bm{\delta}}
\newcommand{\vnu}{\bm{\nu}}
\newcommand{\vd}{\mathbf{d}}
\newcommand{\vs}{\mathbf{s}}
\newcommand{\vt}{\mathbf{t}}
\newcommand{\vh}{\mathbf{h}}
\newcommand{\ve}{\mathbf{e}}
\newcommand{\vf}{\mathbf{f}}
\newcommand{\vg}{\mathbf{g}}
\newcommand{\vz}{\mathbf{z}}
\newcommand{\vm}{\mathbf{m}}
\newcommand{\vk}{\mathbf{k}}
\newcommand{\va}{\mathbf{a}}
\newcommand{\vb}{\mathbf{b}}
\newcommand{\vv}{\mathbf{v}}
\newcommand{\vr}{\mathbf{r}}
\newcommand{\vy}{\mathbf{y}}
\newcommand{\vu}{\mathbf{u}}
\newcommand{\vp}{\mathbf{p}}
\newcommand{\vq}{\mathbf{q}}
\newcommand{\vc}{\mathbf{c}}
\newcommand{\vn}{\mathbf{n}}
\newcommand{\vP}{\mathbf{P}}
\newcommand{\vG}{\mathbf{G}}

\newcommand{\hil}{\ch}
\newcommand{\rkhs}{\hil}
\newcommand{\manifold}{\cm} 
\newcommand{\cloud}{\cc} 
\newcommand{\graph}{\cg}    
\newcommand{\vertices}{\cv} 
\newcommand{\coreg}{{\scriptscriptstyle \cc}}
\newcommand{\intrinsic}{{\scriptscriptstyle \ci}}
\newcommand{\ambient}{{\scriptscriptstyle \ca}}
\newcommand{\isintrinsic}[1]{#1^{\scriptscriptstyle \ci}}
\newcommand{\isambient}[1]{#1^{\scriptscriptstyle \ca}}
\newcommand{\hilamb}{\ch^\ambient}
\newcommand{\hilintr}{\ch^\intrinsic}
\newcommand{\M}{\text{FIX ME NOW}}  
\newcommand{\ktilde}{\tilde{k}}
\newcommand{\corkhs}{\tilde{\hil}}
\newcommand{\cok}{\ktilde}
\newcommand{\coK}{\tilde{K}}
\newcommand{\copar}{\lambda}
\newcommand{\reg}{\gamma}
\newcommand{\rega}{\gamma_{1}}
\newcommand{\regb}{\gamma_{2}}
\newcommand{\regamb}{\gamma_{\ambient}}
\newcommand{\regintr}{\gamma_{\intrinsic}}
\newcommand{\regfn}{\Omega}
\newcommand{\regfnintr}{\regfn_{\intrinsic}}
\newcommand{\regfnamb}{\regfn_{\ambient}}
\newcommand{\regfncoreg}{\regfn_{\coreg}}
\newcommand{\bq}{\begin{equation}}
\newcommand{\eq}{\end{equation}}
\newcommand{\ba}{\begin{eqnarray}}
\newcommand{\ea}{\end{eqnarray}}
\newcommand{\spana}{\cl^{1}}
\newcommand{\spanb}{\cl^{2}}
\newcommand{\ha}{\hil^{1}}
\newcommand{\hb}{\hil^{2}}
\newcommand{\fa}{f^{1}}
\newcommand{\fb}{f^{2}}
\newcommand{\Fa}{\cf^{1}}
\newcommand{\Fb}{\cf^{2}}
\newcommand{\ka}{k^{1}}
\newcommand{\kb}{k^{2}}
\newcommand{\vcok}{\boldsymbol{\cok}}
\newcommand{\vka}{\vk^{1}}
\newcommand{\vkb}{\vk^{2}}
\newcommand{\Ka}{K^{1}}
\newcommand{\Kb}{K^{2}}
\newcommand{\kux}{\vk_{Ux}}
\newcommand{\kuz}{\vk_{Uz}}
\newcommand{\kuu}{K_{UU}}
\newcommand{\kul}{K_{UL}}
\newcommand{\klu}{K_{LU}}
\newcommand{\kll}{K_{LL}}
\newcommand{\kuxa}{\vk_{Ux}^{1}}
\newcommand{\kuza}{\vk_{Uz}^{1}}
\newcommand{\Kuua}{K_{UU}^{1}}
\newcommand{\Kula}{K_{UL}^{1}}
\newcommand{\Klua}{K_{LU}^{1}}
\newcommand{\Klla}{K_{LL}^{1}}
\newcommand{\kuxb}{\vk_{Ux}^{2}}
\newcommand{\kuzb}{\vk_{Uz}^{2}}
\newcommand{\Kuub}{K_{UU}^{2}}
\newcommand{\Kulb}{K_{UL}^{2}}
\newcommand{\Klub}{K_{LU}^{2}}
\newcommand{\Kllb}{K_{LL}^{2}}
\newcommand{\Ksum}{S}
\newcommand{\ksum}{s}
\newcommand{\vksum}{\vs}
\newcommand{\intrinsicRegMat}{M_{\intrinsic}}
\newcommand{\coregPointCloudMat}{M_{\coreg}}
\newcommand{\vid}[1]{#1^\text{vid}}
\newcommand{\aud}[1]{#1^\text{aud}}
\newcommand{\bad}[1]{#1_\text{bad}}
\newcommand{\empcompat}{\hat{\chi}}
\newcommand{\nn}{\ensuremath{k}}
\newcommand{\uva}{\ushort{\va}}
\newcommand{\uvf}{\ushort{\vf}}
\newcommand{\uvg}{\ushort{\vg}}
\newcommand{\uvk}{\ushort{\vk}}
\newcommand{\uvw}{\ushort{\vw}}
\newcommand{\uvh}{\ushort{\vh}}
\newcommand{\uvbeta}{\ushort{\vbeta}}
\newcommand{\uA}{\ushort{A}}
\newcommand{\uG}{\ushort{G}}
\def\la{{\langle}}
\def\ra{{\rangle}}
\def\R{{\reals}}
\newcommand{\mbf}[1]{\mathbf{#1}}
\newcommand{\mbb}[1]{\mathbb{#1}}
\newcommand{\mcal}[1]{\mathcal{#1}}
\newcommand{\remove}[1]{}
\newcommand{\nuc}[1]{\left\lVert #1\right\rVert_*}
\newcommand{\spec}[1]{\left\lVert #1\right\rVert_2}
\newcommand{\frob}[1]{\left\lVert #1\right\rVert_\fro}
\newcommand{\norm}[1]{\left\lVert #1\right\rVert}
\newcommand{\abs}[1]{\left\lvert #1\right\rvert}
\newcommand{\obs}[1]{P_\Omega({#1})}
\newcommand{\red}[1]{{\color{red} #1}}
\newcommand{\ball}{\mathcal{B}}
\newcommand{\sphere}{\mathcal{S}}
\newcommand{\X}{\mathcal{X}}
\newcommand{\domain}{\mathcal{C}}
\newcommand{\polytope}{\mathcal{P}}
\newcommand{\Am}{\bm{A}}
\newcommand{\Bm}{\bm{B}}
\newcommand{\xv}{\bm{x}}
\newcommand{\yv}{\bm{y}}
\newcommand{\zv}{\bm{z}}
\newcommand{\sv}{\bm{s}}
\newcommand{\av}{\bm{a}}
\newcommand{\bv}{\bm{b}}
\newcommand{\pv}{\bm{p}}
\newcommand{\vo}{\bm{o}}
\newcommand{\uv}{\bm{u}}
\newcommand{\wv}{\bm{w}}
\newcommand{\omegav}{\bm{\omega}}
\newcommand{\mv}{\bm{m}}
\newcommand{\GW}{G_\setC}
\newcommand{\G}{G}
\newcommand{\0}{\bm{0}}
\newcommand{\id}{\bm{\iota}}
\newcommand{\alphav}{\bm{\alpha}}
\newcommand{\nuv}{\bm{\nu}}
\newcommand{\thetav}{\bm{\theta}}
\newcommand{\lambdav}{\bm{\lambda}}
\newcommand{\epsilonv}{\bm{\epsilon}}
\newcommand{\row}{\text{row}}
\newcommand{\col}{\text{col}}
\newcommand{\lft}{\text{left}}
\newcommand{\rgt}{\text{right}}
\newcommand{\one}{\mathbf{1}} 
\newcommand{\Nrm}{\mathcal{N}}
\newcommand{\Em}{\mathbb{E}}
\newcommand{\Var}{\mathbb{V}}
\newcommand{\Tr}{\mathrm{Tr}}
\newcommand{\diag}{\mathrm{diag}}
\newcommand{\aatop}[2]{\genfrac{}{}{0pt}{}{#1}{#2}}
\newcommand{\ie}{\textit{i}.\textit{e}.}
\newcommand{\eg}{\textit{e}.\textit{g}.}
\newcommand{\inv}{^{-1}}
\newcommand{\onehlf}{\frac{1}{2}}
\newcommand{\tonehlf}{\tfrac{1}{2}}
\newcommand{\defnref}[1]{Def.~\ref{defn:#1}}
\newcommand{\propref}[1]{Prop.~\ref{prop:#1}}
\newcommand{\figref}[1]{Fig.~\ref{fig:#1}}  
\newcommand{\secref}[1]{Sec.~\ref{sec:#1}}  
\newcommand{\subsecref}[1]{Sec.~\ref{subsec:#1}}  
\newcommand{\suppsecref}[1]{Supplementary Sec.~\ref{supp:#1}}  
\newcommand{\tabref}[1]{Table ~\ref{tab:#1}}  
\newcommand{\algoref}[1]{Algorithm~\ref{algo:#1}}  
\newcommand{\corref}[1]{Corr.~\ref{cor:#1}}  
\newcommand{\thmref}[1]{Thm.~\ref{thm:#1}}  
\newcommand{\lemref}[1]{Lemma.~\ref{lemma:#1}}  
\newcommand{\var}{\mathrm{var}}
\newcommand{\Dat}{\mathcal{D}}
\newcommand{\Ev}{\mathcal{E}}
\newcommand{\vxloc}{\mathcal{X}}
\newcommand{\vkx}{\mathbf{k_x}}
\newcommand{\vky}{\mathbf{k_y}}
\newcommand{\vphi}{\mathbf{\ensuremath{\bm{\phi}}}}
\newcommand{\bmu}{\mathbf{\ensuremath{\bm{\mu}}}}
\newcommand{\bK}{\mathbf{K}}
\newcommand{\Vm}{\mathbb{V}} 
\newcommand{\vone}{\mathbf{1}} 
\newcommand{\fmap}{\bm{\lambda}{_{map}}}
\newcommand{\bphi}{\bm{\phi}}
\newcommand{\phimap}{{\hat {\bm{\phi}}}{_{map}}}
\newcommand{\fmapstar}{\mathbf{f^*_{map}}}
\newcommand{\muf}{\ensuremath{\mu_f}}
\newcommand{\vmuf}{\mathbf{\ensuremath{\bm{\mu}_f}}}
\newcommand{\vpi}{\mathbf{\ensuremath{\bm{\pi}}}}
\newcommand{\vmu}{\mathbf{\ensuremath{\bm{\mu}}}}
\newcommand{\vtheta}{\mathbf{\ensuremath{\bm{\theta}}}}
\newcommand{\LL}{\ensuremath{\mathcal{L}}}
\newcommand{\mR}{\mathcal{R}}
\newcommand{\mW}{\mathbf{W}}
\newcommand{\mZ}{\mathbf{Z}}
\newcommand{\mC}{\mathbf{C}}
\newcommand{\mJ}{\mathbf{J}}
\newcommand{\set}[1]{\{#1\}}
\newcommand{\kml}{{\hat \vk}_{ML}}
\newcommand{\Lprior}{\Lambda_p}
\newcommand{\Lix}{L_{x}}
\newcommand{\thetmap}{\theta_{\mu}}
\newcommand{\Lmap}{\Lambda_\mu}
\newcommand{\thetli}{\theta_{l}}
\newcommand{\Lli}{\Lambda_{l}}
\newcommand{\nsevar}{\sigma^2}
\newcommand{\nsestd}{\sigma}
\newcommand{\vchi}{\bm{\chi}}
\newcommand{\vomega}{\bm{\omega}}

\newcommand{\mecrej}{\mathcal{M}_{rej}}
\newcommand{\mj}[1]{{\color{blue}{ mijung : #1}}}

\begin{abstract}
 We develop a novel approximate Bayesian computation (ABC) framework, \textit{ABCDP}, that produces differentially private (DP) and approximate posterior samples. Our framework takes advantage of the Sparse Vector Technique (SVT), widely studied in the differential privacy literature.  SVT incurs the privacy cost only when a condition (whether a quantity of interest is above/below a threshold) is met. If the condition is met sparsely during the repeated queries, SVT can drastically reduces the cumulative privacy loss, unlike the usual case where  every query incurs the privacy loss. In ABC, the quantity of interest is the distance between observed and simulated data, and only when the distance is below a threshold, we take the corresponding prior sample as a posterior sample. Hence, applying SVT to ABC is an organic way to transform an ABC algorithm to a privacy-preserving variant with minimal modification, but yields the posterior samples with a high privacy level. We theoretically analyze the interplay between the noise added for privacy and the accuracy of the posterior samples.
We apply ABCDP to several data simulators and show the efficacy of the proposed framework.
\end{abstract}

\section{Introduction}


Approximate Bayesian computation (ABC) aims at identifying the posterior distribution over simulator parameters.
The posterior distribution is of interest as it provides the mechanistic understanding of the stochastic procedure that directly generates data
in many areas such as climate and weather, ecology, cosmology, and bioinformatics \cite{Tavare1997, Ratmann_07, Bazinetal_2010, Schaferetal_12}.
Under these complex models, directly evaluating the likelihood of data given the parameters is often intractable.
ABC resorts to an approximation of the likelihood function using simulated data that are {\it{similar}} to the actual observations.

In the simplest form of ABC called \textit{rejection ABC} \cite{rejABC}, we  proceed by
sampling multiple model parameters from a prior distribution $\pi$: $\theta_1, \theta_2, \ldots \sim \pi$. For each $\theta_{t}$, a
pseudo dataset $Y_{t}$ is generated from a simulator (the forward sampler associated with the intractable likelihood $p(y|\theta)$). The parameter $\theta_{t}$
for which the generated $Y_{t}$ are similar to the observed $Y^*$, as decided by
$\rho(Y_{i}, Y^*) <
\epsilon_{{abc}}$,  are accepted. Here $\rho$ is a notion of distance, for instance, L2 distance beween $Y_t$ and $Y^*$ in terms of a pre-chosen summary statistic. Whether the distance is small or large is determined by $\epsilon_{{abc}}$, a \emph{similarity threshold}.
The result  is samples $\left\{ \theta_{t}\right\} _{t=1}^{M}$ from
a distribution, $\tilde{p}_{\epsilon}(\theta|Y^{*}) \propto
\pi(\theta)\tilde{p}_{\epsilon}(Y^{*}|\theta)$, where
$\tilde{p}_{\epsilon}(Y^{*}|\theta) = \int_{B_{\epsilon}(Y^{*})}p(Y|\theta)dY$
and $B_{\epsilon}\left(Y^{*}\right)  =  \left\{
    Y\;:\;\rho(Y,Y^{*})<\epsilon_{{abc}}\right\}$.
    As the likelihood computation is approximate, so is the posterior
    distribution. Hence, this framework is named by \textit{approximate} Bayesian
    computation, as we do not compute the likelihood of data explicitly.

Most ABC algorithms evaluate the data similarity in terms of summary statistics computed by an aggregation of individual datapoints \cite{Joyce_Marjoram_08, Robert11, Nunes2010, Aeschbacher12, drovandi2015, Aeschbacher12, Fearnhead2012}. However,
this seemingly innocuous step of similarity check could impose a privacy threat, as aggregated statistics could still reveal an individual's participation to the dataset
with the help of combining other publicly available datasets (see \cite{Homer2008, Johnson2013}).
In addition, in some studies, the actual observations are privacy-sensitive in nature e.g., Genotype data for estimating Tuberculosis transmission parameters \cite{Tanaka:2006aa}.
Hence, it is necessary to privatize the step of similarity check in ABC algorithms.

In this light, we introduce an ABC framework that obeys the notion of {\it{differential privacy}}.
 The differential privacy definition provides a way to quantify the amount of information that the distance computed on the privacy-sensitive data contains on whether or not a single individual's data is included (or modified) in the data \cite{dwork2006calibrating}. Differential privacy also provides rigorous privacy guarantees in the presence of \textit{arbitrary side information} such as similar  public data available.

A common form of applying DP to an algorithm is by adding noise to outputs of
the algorithm, called \textit{output perturbation}\cite{Chaudhuri_11}.  In case of ABC, we found that \textit{adding noise to the distance}
computed on the real observations and pseudo-data suffices the privacy
guarantee of the resulting posterior samples.
However, if we choose to simply add noise to the distance in every ABC inference step, this DP-ABC inference imposes an additional challenge due to the \textit{repeated} use of the real observations.
%
%
%
The \textit{composition} property of differential privacy
 states  that the privacy level degrades over a repeated use of data. To overcome this challenge, we adopt the \textit{sparse vector technique} (SVT) \cite{Dwork14}, and apply it to the rejection ABC paradigm. The SVT outputs \textit{noisy} answers of whether or not a stream of queries being above a certain threshold, where the only place the privacy cost incurs is when SVT outputs at most $c$ “above threshold” answers.
 This is a significant saving of privacy cost, as arbitrarily many “below threshold” answers are privacy cost free.

 We name our framework, the ABC combined with SVT, as \textit{ABCDP} (Approximate Bayesian Computation with Differential Privacy).
Under ABCDP, we theoretically analyze the effect of noise added to the distance in the resulting posterior samples and the subsequent posterior integrals.
%
%
Putting together, we conclude our introduction by summarizing our main contributions:
\begin{enumerate}[topsep=2pt,itemsep=2mm,parsep=1mm]
\item We provide a novel ABC framework, ABCDP, which combines \textit{sparse vector technique} (SVT) \cite{Dwork14} with the rejection ABC paradigm. The resulting ABCDP
    framework can improve the trade-off between privacy and accuracy of the
    posterior samples, as the privacy cost under ABCDP is a function of the number of \textit{only accepted} posterior samples.
\item We theoretically analyze ABCDP by focusing on the effect of noisy posterior samples in terms of two quantities. The first quantity provides the probability of an output of ABCDP being different from that of ABC at any given time during inference. The second quantity provides the convergence rate, i.e., how fast the posterior integral using ABCDP's noisy samples approaches that using non-private ABC's samples. We write both quantities as a function of added noise for privacy to better understand the characteristics of ABCDP.
\item We validate our theory in the experiments using several simulators.
\end{enumerate}

\section{Background}

We start by describing relevant background information.

\subsection{Approximate Bayesian Computation}
Given a set $Y^*$ containing observations,  \textbf{rejection ABC}
\cite{rejABC} yields samples from an approximate posterior
distribution by repeating the following three steps:
\begin{align}
\theta &\sim \pi(\theta), \\
Y=\{y_1, y_2, \ldots\}  &\sim p(y|\theta),\\
p_{\epsilon_{abc}}(\theta|Y^*) &\sim p_{\epsilon_{abc}}(Y^*|\theta)\pi(\theta),
\end{align}
where the pseudo dataset $Y$ is compared with the observations $Y^*$ via
\begin{align}\label{rej_ABC_similarity}
p_{\epsilon_{abc}}(Y^*|\theta) &= \int_{B_{\epsilon_{abc}}(Y^*)} p(Y|\theta) dY, \nonumber\\
B_{\epsilon_{abc}}(Y^*) & = \{Y| \rho(Y,Y^*) \leq \epsilon_{abc}\},
\end{align}
where $\rho$ is a divergence measure between two datasets.
Any distance metric can be used for $\rho$. For instance, one can use the L2 distance under two datasets in terms of a pre-chosen set of summary statistics,
i.e., $\rho(Y,Y^*) = D(S(Y), S(Y^*))$, with L2 distance measure $D$ on the statistics computed by $S$.

 A more statistically sound choice for $\rho$ would be
\textit{Maximum Mean Discrepancy} (MMD, \cite{Gretton2012}) as used in \cite{ParJitSej2016}.
Unlike a pre-chosen finite dimensional summary statistic typically used in ABC, MMD compares two distributions in terms of all the possible moments of the the random variables described by the two distributions. Hence,  ABC frameworks using the MMD metric such as \cite{ParJitSej2016} can avoid the problem of non-sufficiency of a chosen summary statistic that may incur in many ABC methods. For this reason, in this paper we demonstrate our algorithm using the MMD metric. However, other metrics can be used as we illustrated in our experiments.

\paragraph{Maximum Mean Discrepancy}

Assume that the data $Y\subset\mathcal{X}$
and let $k\colon\mathcal{X}\times\mathcal{X}$ be a positive definite
kernel. The MMD between two distributions $P,Q$ is defined as $ \mathrm{MMD}^2(P,Q)=
 \mathbb{E}_{x,x'\sim P}k(x,x')+\mathbb{E}_{y,y'\sim Q}k(y,y') -2\mathbb{E}_{x\sim P}\mathbb{E}_{y\sim Q}k(x,y). $
If $k$ is a
characteristic kernel \cite{Sriperumbudur2011}, then $P\mapsto\mathbb{E}_{x\sim P}[\phi(x)]$
is injective, implying that $\mathrm{MMD}(P,Q)=0$, if and only if $P=Q$.
When $P,Q$ are
observed through samples $X_{m}=\{x_{i}\}_{i=1}^{m}\sim P$ and $Y_{n}=\{y_{i}\}_{i=1}^{n}$, MMD can be estimated
by empirical averages
\cite[eq.3]{Gretton2012}: $\widehat{\mathrm{MMD}}^2(X_{m},Y_{n}) = \tfrac{1}{m^2}\sum_{i,j=1}^{m}k(x_{i},x_{j}) + \tfrac{1}{n^2}\sum_{i,j=1}^{n}k(y_{i},y_{j}) -\tfrac{2}{mn}\sum_{i=1}^{m}\sum_{j=1}^{n}k(x_{i},y_{j}).$
When applied in the ABC setting, one input to $\widehat{\mathrm{MMD}}$ is the observed dataset
$Y^{*}$ and the other input is a pseudo dataset $Y_{t}\sim p(\cdot|\theta_{t})$ generated
by the simulator given  $\theta_{t}\sim\pi(\theta)$.
%

\subsection{Differential Privacy}
An output from an algorithm that takes in sensitive data as input will
naturally contain some information of the sensitive data $\mathcal{D}$. The
goal of differential privacy is to augment such an  algorithm so that useful
information about the population is retained, while sensitive information such as an individual's participation
in the dataset cannot be learned \cite{Dwork14}. A common way to achieve these two
seemingly paradoxical goals is by deliberately injecting a controlled-level of random noise
to the to-be-released quantity. The modified procedure, known as a DP mechanism, now
gives a stochastic output  due to the injected noise. In the DP framework,
higher level of noise provides stronger privacy guarantee at the expense of
less accurate population-level information that can be derived from the
released quantity. Less noise added to the output thus reveals more about an individual's presence
in the dataset.

More formally, given a mechanism $\mathcal{M}$ (a \textit{mechanism} takes a dataset as input and produces stochastic outputs) and neighbouring datasets
$\Dat$, $\Dat'$ differing by a single entry (either by replacing one datapoint with another, or by adding/removing a datapoint to/from $\Dat$), the \emph{privacy loss} of an
outcome $o$ is defined by
\begin{equation}
L^{(o)} = \log \frac{P(\mathcal{M}(\Dat) = o)}{P(\mathcal{M}(\Dat') = o)}.
\end{equation}
The mechanism $\mathcal{M}$ is called $\epsilon$-DP if and only if
$|L^{(o)}|\leq \epsilon$,
for all possible outcomes $o$ and for all possible neighbouring datasets $\Dat, \Dat'$.
The definition states that a single individual's participation in the data does
not change the output probabilities by much; this limits the amount of
information that the algorithm reveals about any one individual.
A weaker or an \textit{approximate} version of the above notion is ($\epsilon, \delta$)-DP: $\mathcal{M}$
is ($\epsilon, \delta$)-DP if $|L^{(o)}| \leq \epsilon$, with probability  $1-\delta$, where $\delta$ is often called a failure probability which quantifies how often the DP guarantee of the mechanism fails.
%
%

Output perturbation is a commonly used DP mechanism to ensure the outputs of an algorithm to be differentially private. Suppose a deterministic function $h: \Dat \mapsto \mathbb{R}^p$ computed on sensitive data $\Dat$ outputs a $p$-dimensional vector quantity. In order to make $h$ private, we can add noise to the output of $h$, where the level of noise is calibrated to the {\it{global sensitivity}}
\cite{dwork2006our}, $\Delta_h$, defined by the maximum difference in terms of some norm $||h(\Dat)-h(\Dat') ||$ for neighboring $\Dat$ and $\Dat'$ (i.e. differ by one data sample).

There are two important properties of differential privacy.
First,  the
\textit{post-processing invariance} property \cite{dwork2006our} tells us that
the composition of any arbitrary data-independent mapping with an
$(\epsilon,\delta)$-DP algorithm is also $(\epsilon,\delta)$-DP.
Second, the \textit{composability} theorem \cite{dwork2006our} states that the strength
of privacy guarantee degrades with repeated use of DP-algorithms.
Formally, given an $\epsilon_1$-DP mechanism
$\mathcal{M}_1$ and an $\epsilon_2$-DP mechanism
$\mathcal{M}_2$, the mechanism $\mathcal{M}(\mathcal{D}) :=
(\mathcal{M}_1(\mathcal{D}), \mathcal{M}_2(\mathcal{D}))$ is
$(\epsilon_1+\epsilon_2)$-DP.
This composition is often-called \textit{linear} composition, under which the total privacy loss linearly increases with the number of repeated use of DP-algorithms.
The \textit{strong} composition \cite{Dwork14}[Theorem 3.20] improves the linear composition, while the resulting DP guarantee becomes weaker (i.e., approximate $(\epsilon,\delta)$-DP).
Recently, more refined methods further improves the privacy loss (e.g., \cite{46029}).

\subsection{AboveThreshold and Sparse Vector Technique}
One of the DP mechanisms we will utilize for making rejection ABC differentially private is \textit{AboveThreshold} and \textit{Sparse vector technique} (SVT) \cite{Dwork14}.
AboveThreshold outputs 1 when a query value exceeds a pre-defined threshold, or 0 otherwise. This resembles rejection ABC where the output is 1 when the distance is less than a chosen threshold. To ensure the output is differentially private, AboveThreshold adds noise to both the threshold and the query value. We take the same route as AboveThreshold to make our ABCDP outputs differentially private.
Sparse vector technique (SVT) consists of $c$ calls to AboveThreshold, where $c$ in our case determines how many posterior samples ABCDP releases.

Before presenting our ABCDP framework, we first describe the privacy setup we consider in this paper.

\section{Problem formulation}
\label{sec:Problem Setup}

We assume a \textit{data owner} who owns  sensitive data $Y^*$ and is willing to contribute to the posterior inference.
We also assume a \textit{modeler} who aims to learn the posterior
distribution of the parameters of a simulator.
Our ABCDP algorithm proceeds with the two steps:
\begin{enumerate}[topsep=2pt,itemsep=2mm,parsep=1mm]
    \item \textit{Non-private step:} The modeler draws a parameter sample $\theta_t \sim \pi(\theta)$; then generates a pseudo-dataset $Y_{t}$, where  $Y_t \sim p(y|\theta_t)$ for $t=1,\cdots,T$ for a large $T$.
    We assume these parameter-pseudo-data pairs $\{\theta_t, Y_t\}_{t=1}^T$ are publicly available (even to an adversary).
\item \textit{Private step:} The data owner takes the whole sequence of parameter-pseudo-data pairs $\{(\theta_t, Y_t)\}_{t=1}^T$ and
runs our ABCDP algorithm in order to
output a set of \textit{differentially private} binary indicators determining whether or not to accept each $\theta_t$.
\end{enumerate}

Note that $T$ is the maximum number of parameter-pseudo-data pairs that are publicly available. We will run our algorithm for $T$ steps, while our algorithm can terminate as soon as we output the $c$ number of accepted posterior samples. So, generally $c \leq T$. The details are introduced next.
 %
%


\section{ABCDP}
%

Recall that the only place where the real data $Y^*$ appears in the ABC algorithm is when we judge whether the simulated data is similar to the real data, i.e., as in \eqref{rej_ABC_similarity}. Our method hence adds noise to this step. In order to take advantage of the privacy analysis of SVT, we also add noise to the ABC threshold and to the ABC distance. Consequently, we introduce two perturbation steps.

Before we introduce them, we describe the global sensitivity of the distance, as this quantity tunes the amount of noise we will add in the two perturbation steps.
For $\rho(Y^*,Y)=\widehat{\mathrm{MMD}}(Y^*,Y)$ with a bounded kernel, then the sensitivity of the distance is
$\Delta_\rho = O(1/N)$ as shown in Lemma \ref{lem:deltammd}.

\begin{restatable}[$\Delta_\rho = O(1/N)$ for MMD]{lem}{deltammd}
    \label{lem:deltammd}
Assume that $Y^{*}$ and each pseudo dataset $Y_t$ are of the same
cardinality $N$. Set $\rho(Y^*,Y)=\widehat{\mathrm{MMD}}(Y^*,Y)$ with
a kernel $k$ bounded by $B_{k} >0$ i.e., $\sup_{x,y\in\mathcal{X}}k(x,y)\le B_{k} <\infty$.
Then,
\begin{align}
\sup_{(Y^*, Y^{*'}),Y}|\rho(Y^*,Y)-\rho(Y^{*'},Y)| \le \Delta_{\rho}:=\frac{2}{N}\sqrt{B_{k}} \nonumber
\end{align}
and $\sup_{Y^*,Y}\rho(Y^*,Y) \le 2\sqrt{B_k}.$
\end{restatable}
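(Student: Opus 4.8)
The plan is to rewrite the empirical MMD in terms of the kernel mean embedding and then exploit the reverse triangle inequality in the RKHS. Let $k$ be the bounded positive‑definite kernel with feature map $\phi(x)=k(\cdot,x)$ into the RKHS $\mathcal{H}$, and recall that the biased (V‑statistic) estimator quoted in the background is exactly $\widehat{\mathrm{MMD}}^2(X_m,Y_n)=\|\hat{\mu}_{X_m}-\hat{\mu}_{Y_n}\|_{\mathcal{H}}^2$ where $\hat{\mu}_{X_m}=\tfrac1m\sum_{i=1}^m\phi(x_i)$. Hence, since $|Y^*|=|Y|=N$, we have $\rho(Y^*,Y)=\|\hat{\mu}_{Y^*}-\hat{\mu}_{Y}\|_{\mathcal{H}}$, a genuine Hilbert‑space norm, which is what makes the triangle inequalities below legitimate.

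First I would bound the sensitivity. Let $Y^*$ and $Y^{*'}$ be neighbours, i.e. identical except that one point $z\in Y^*$ is replaced by $z'\in Y^{*'}$, so that $\hat{\mu}_{Y^*}-\hat{\mu}_{Y^{*'}}=\tfrac1N(\phi(z)-\phi(z'))$. By the reverse triangle inequality,
\[
\bigl|\,\rho(Y^*,Y)-\rho(Y^{*'},Y)\,\bigr|
=\bigl|\,\|\hat{\mu}_{Y^*}-\hat{\mu}_{Y}\|-\|\hat{\mu}_{Y^{*'}}-\hat{\mu}_{Y}\|\,\bigr|
\le\|\hat{\mu}_{Y^*}-\hat{\mu}_{Y^{*'}}\|=\tfrac1N\|\phi(z)-\phi(z')\|.
\]
Then I use $\|\phi(x)\|_{\mathcal{H}}=\sqrt{k(x,x)}\le\sqrt{B_k}$ for every $x$, so $\|\phi(z)-\phi(z')\|\le\|\phi(z)\|+\|\phi(z')\|\le 2\sqrt{B_k}$; taking the supremum over neighbouring $(Y^*,Y^{*'})$ and over $Y$ gives $\Delta_\rho\le\tfrac{2}{N}\sqrt{B_k}$. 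For the second claim I would apply the triangle inequality directly: $\rho(Y^*,Y)=\|\hat{\mu}_{Y^*}-\hat{\mu}_{Y}\|\le\|\hat{\mu}_{Y^*}\|+\|\hat{\mu}_{Y}\|$, and each embedding satisfies $\|\hat{\mu}_{Y^*}\|\le\tfrac1N\sum_{i}\|\phi(z_i)\|\le\sqrt{B_k}$, whence $\sup_{Y^*,Y}\rho(Y^*,Y)\le 2\sqrt{B_k}$.

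There is no deep obstacle here; the two points needing care are (i) confirming that the biased V‑statistic form of $\widehat{\mathrm{MMD}}^2$ is literally $\|\hat{\mu}_{Y^*}-\hat{\mu}_{Y}\|^2$ so that the square root is well defined and the triangle inequality genuinely applies — this would fail for the unbiased U‑statistic, which can be negative — and (ii) being explicit that ``neighbouring'' here means replacing one element while keeping the cardinality $N$ fixed, consistent with the standing assumption that $Y^*$ and each $Y_t$ have size $N$. If one instead allowed add/remove neighbours, the cardinality mismatch would alter the constant, so I would state that we adopt the replacement notion throughout.
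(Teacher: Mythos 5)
Your proposal is correct and follows essentially the same route as the paper: both rewrite $\widehat{\mathrm{MMD}}$ as the RKHS norm of the difference of empirical mean embeddings, apply the reverse triangle inequality to isolate $\tfrac{1}{N}\|\phi(z)-\phi(z')\|_{\mathcal{H}}$ for the sensitivity bound, and bound the overall distance by $2\sqrt{B_k}$. The only cosmetic difference is that you bound $\|\phi(z)-\phi(z')\|_{\mathcal{H}}$ by the triangle inequality on feature maps, whereas the paper expands the squared norm into kernel evaluations before bounding; both yield the same constant.
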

A proof is given in \suppsecref{proof_deltammd}.
For $\rho = \widehat{\mathrm{MMD}}$ using a Gaussian kernel,
$k(x,y) = \exp \left(-\frac{\|x-y\|^2}{2l^2} \right)$ where $l>0$ is the
bandwidth of the kernel, $B_k=1$ for any $l>0$.

Now we introduce the two perturbation steps used in our algorithm summarized in \algoref{ABCDP}.

\textbf{Noise for privatizing the ABC threshold:}
	\begin{align}\label{m_t}
        \hat\epsilon_{abc} &=
        \epsilon_{abc} + m_t
	\end{align}
    where $ m_t \sim \mbox{Lap}(b)$, i.e., drawn from the zero-mean Laplace distribution with a scale parameter $b$.
%

\textbf{Noise for privatizing the distance:}
	\begin{align}\label{nu_t}
        \hat{\rho}_t &=
        \rho(Y^*, Y_t) + \nu_t
    \end{align}
    where $ \nu_t \sim \mbox{Lap}(2b)$.

Due to these perturbations, \algoref{ABCDP} runs with the privatized threshold and distance. We can choose to perturb the threshold only once, or every time we output $1$ by setting RESAMPLE either to false or true. After outputting $c$ number of $1$'s, the algorithm is terminated. How do we calculate the resulting privacy loss under different options we choose?

\begin{algorithm}[t]
	\caption{Proposed $c$-sample ABCDP}
    \label{algo:ABCDP}
    \begin{algorithmic}[1]
		\vspace{0.1cm}
        \REQUIRE Observations $Y^*$, Number of accepted posterior sample size $c$, privacy tolerance
        $\epsilon_{total}$, ABC threshold
        $\epsilon_{abc}$,  distance $\rho$, and parameter-pseudo-data pairs $\{(\bm \theta_t, Y_t )\}_{t=1}^T$, and option RESAMPLE.
		\vspace{0.1cm}
        \ENSURE
 $\epsilon_{total}$-DP indicators  $\{\tilde\tau_t\}_{t=1}^{T}$ for corresponding samples $\{ \bm \theta_{t}\}_{t=1}^T$\\
        \vspace{2mm}
        \STATE Calculate the noise scale $b$ by \thmref{abcdp_Lap}.
        \STATE Privatize ABC threshold: $\hat{\epsilon}_{abc} = \epsilon_{abc} + m_t$ via \eqref{m_t}
        \STATE Set count=0

        \FOR{$t = 1,\ldots, T$}
	\STATE Privatize distance: $\hat\rho_t = \rho(Y^*, Y_t) + \nu_t$ via \eqref{nu_t}
	\vspace{2pt}

        \IF { $ \hat\rho_t \le \hat{ \epsilon}_{abc}$ }
        \STATE Output $\tilde\tau_t= 1 $
        \STATE count $=$ count$ + 1$
        \IF {RESAMPLE}
        \STATE $\hat{\epsilon}_{abc} = \epsilon_{abc}+ m_t$ via \eqref{m_t}
        \ENDIF
        \ELSE
         \STATE Output $\tilde\tau_t= 0 $
        \ENDIF
        \IF {count $\geq$ c}
            \STATE Break the loop
        \ENDIF
        \ENDFOR
  \end{algorithmic}
\end{algorithm}

We formally state the relationship between the noise scale and the final privacy loss $\epsilon_{tot}$ for the Laplace noise
in \thmref{abcdp_Lap}.
\begin{restatable}[\algoref{ABCDP} is $\epsilon_{total}$-DP]{thm}{mrejdp}
    \label{thm:abcdp_Lap}
For any neighbouring datasets $Y^*,Y^{*'}$ of size $N$ and
any dataset $Y$, assume that $\rho$ is such that $0<\sup_{(Y^*, Y^{*'}),Y}|\rho(Y^*,Y)-\rho(Y^{*'},Y)|\le\Delta_{\rho}<\infty$.
\algoref{ABCDP} is
$\epsilon_{total}$-DP, where
\begin{align}\label{eq:epsilon_tot_Lap}
\epsilon_{total} = \left\{
    \begin{array}{ll}
        \frac{(c+1)\Delta_\rho}{b} & \mbox{if RESAMPLE is False}, \\
        \frac{2c\Delta_\rho}{b} & \mbox{if RESAMPLE is True}.
    \end{array}
\right.
\end{align}
\end{restatable}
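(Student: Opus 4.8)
The plan is to recognise \algoref{ABCDP} as a $c$-accept version of the AboveThreshold mechanism and to prove the bound by the standard ``shift the noise variables'' coupling argument behind SVT (cf.\ \cite{Dwork14}, Thm.~3.23, and its multi-query refinement). Write $q_t:=\rho(Y^*,Y_t)$ and $q_t':=\rho(Y^{*'},Y_t)$, so the hypothesis gives $|q_t-q_t'|\le\Delta_\rho$ for every $t$ (for $\rho=\widehat{\mathrm{MMD}}$ with a bounded kernel this is Lemma~\ref{lem:deltammd}). The released object is the finite bit string $o=(\tilde\tau_1,\dots,\tilde\tau_{t_c})$ emitted before the loop halts, a discrete random variable, so it suffices to show $\Pr[\,o\mid Y^*\,]\le e^{\epsilon_{total}}\Pr[\,o\mid Y^{*'}\,]$ for every attainable $o$, the reverse inequality following by interchanging $Y^*$ and $Y^{*'}$. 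Fix $o$, let $t_1<\dots<t_c$ be its accept positions and $R$ its reject positions (every index $\le t_c$ is one or the other). Conditioned on the data, $\{\text{output}=o\}$ is an intersection of half-space constraints in the Laplace noises: the threshold noise(s) of scale $b$ and the per-step distance noises $\nu_t\sim\mathrm{Lap}(2b)$, with joint density $p$. I will exhibit a translation $\sigma$ of the noise vector such that, whenever $o$ is produced under $Y^*$ with noise $\omega$, it is also produced under $Y^{*'}$ with noise $\sigma(\omega)$; since a translation has unit Jacobian, a change of variables gives $\Pr[\,o\mid Y^*\,]\le\big(\sup_\omega p(\sigma^{-1}(\omega))/p(\omega)\big)\,\Pr[\,o\mid Y^{*'}\,]$, and a shift of magnitude $a$ in a $\mathrm{Lap}(s)$ coordinate contributes at most $e^{|a|/s}$ to that supremum.

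Case \textsc{Resample} $=$ \textbf{false}: there is one threshold noise $m$, and $\{\text{output}=o\}$ equals $\{q_t+\nu_t>\epsilon_{abc}+m\ \forall t\in R\}\cap\{q_{t_i}+\nu_{t_i}\le\epsilon_{abc}+m\ \forall i\}$. I take $\sigma$ to send $m\mapsto m-\Delta_\rho$, to leave every reject noise $\nu_t$ ($t\in R$) fixed, and to send each accept noise $\nu_{t_i}\mapsto\nu_{t_i}-2\Delta_\rho$. Using $q_t'\ge q_t-\Delta_\rho$, the inequality $q_t+\nu_t>\epsilon_{abc}+m$ implies $q_t'+\nu_t>\epsilon_{abc}+(m-\Delta_\rho)$, so the reject constraints survive with no shift on their own noise; using $q_{t_i}'\le q_{t_i}+\Delta_\rho$, the inequality $q_{t_i}+\nu_{t_i}\le\epsilon_{abc}+m$ implies $q_{t_i}'+(\nu_{t_i}-2\Delta_\rho)\le\epsilon_{abc}+(m-\Delta_\rho)$, so the accept constraints survive. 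Hence $\sigma$ has the required property, and the density-ratio cost is $e^{\Delta_\rho/b}$ for the threshold shift of size $\Delta_\rho$ at scale $b$, times $e^{2\Delta_\rho/(2b)}=e^{\Delta_\rho/b}$ for each of the $c$ accept-noise shifts of size $2\Delta_\rho$ at scale $2b$, and $1$ for the reject noises; the product is $e^{(c+1)\Delta_\rho/b}$, i.e.\ $\epsilon_{total}=(c+1)\Delta_\rho/b$. Early termination at $T$ with fewer than $c$ accepts only removes accept factors, so the bound still holds.

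Case \textsc{Resample} $=$ \textbf{true}: the threshold noise is redrawn after each accepted sample, so the run splits into $c$ consecutive segments, the $i$-th using its own threshold noise and ending at the accept $t_i$. One can either invoke linear composition of these $c$ pieces---each an AboveThreshold stopping at its first accept, hence $2\Delta_\rho/b$-DP by the $c=1$ instance of the previous case---to get $\tfrac{2c\Delta_\rho}{b}$-DP, or run the global coupling directly, now shifting each of the $c$ threshold noises by $\Delta_\rho$ (cost $e^{c\Delta_\rho/b}$) and each of the $c$ accept noises by $2\Delta_\rho$ at scale $2b$ (cost $e^{c\Delta_\rho/b}$), with reject noises again unshifted, for a total $e^{2c\Delta_\rho/b}$; early termination again does no worse.

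The crux---and the only delicate point---will be getting the signs and magnitudes of the shifts right so that (i) the reject constraints, of which there may be arbitrarily many, are preserved with \emph{zero} shift on their own noise variables, which is exactly why ``below threshold'' answers are privacy-free and is the heart of SVT, while (ii) each accept is charged a shift of $\Delta_\rho$ on a threshold noise \emph{and} of $2\Delta_\rho$ on its distance noise. The particular scaling $\nu_t\sim\mathrm{Lap}(2b)$---distance-noise scale exactly twice the threshold-noise scale---is what makes these two per-accept contributions both equal to $e^{\Delta_\rho/b}$ and produces the clean formulas. (A minor point to record is that the released quantity is the bit string up to halting; accepting the corresponding $\theta_t$ is pure post-processing, so the $\theta$-samples inherit $\epsilon_{total}$-DP, though that is not needed for the statement.)
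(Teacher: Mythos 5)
Your proof is correct and rests on the same mechanism as the paper's: a measure-preserving translation of the Laplace noise vector (threshold noise shifted by $\Delta_\rho$ at scale $b$, each accept's distance noise by $2\Delta_\rho$ at scale $2b$, reject noises untouched), with the privacy cost read off from the Laplace density ratios --- exactly the change-of-variables argument the paper carries out via the quantity $g(Y^*)=\min_{t<k}(\rho(Y_t,Y^*)+\nu_t)$ for RESAMPLE $=$ True and delegates to \cite{SVT_c1} for RESAMPLE $=$ False. The only difference is organizational: you run a single global coupling over the whole output string (and thereby make the RESAMPLE $=$ False case self-contained rather than cited), whereas the paper proves $c=1$ and invokes linear composition; both yield the identical bounds in \eqref{eq:epsilon_tot_Lap}.
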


\begin{figure*}[t]
\centering{\includegraphics[width=0.9\textwidth]{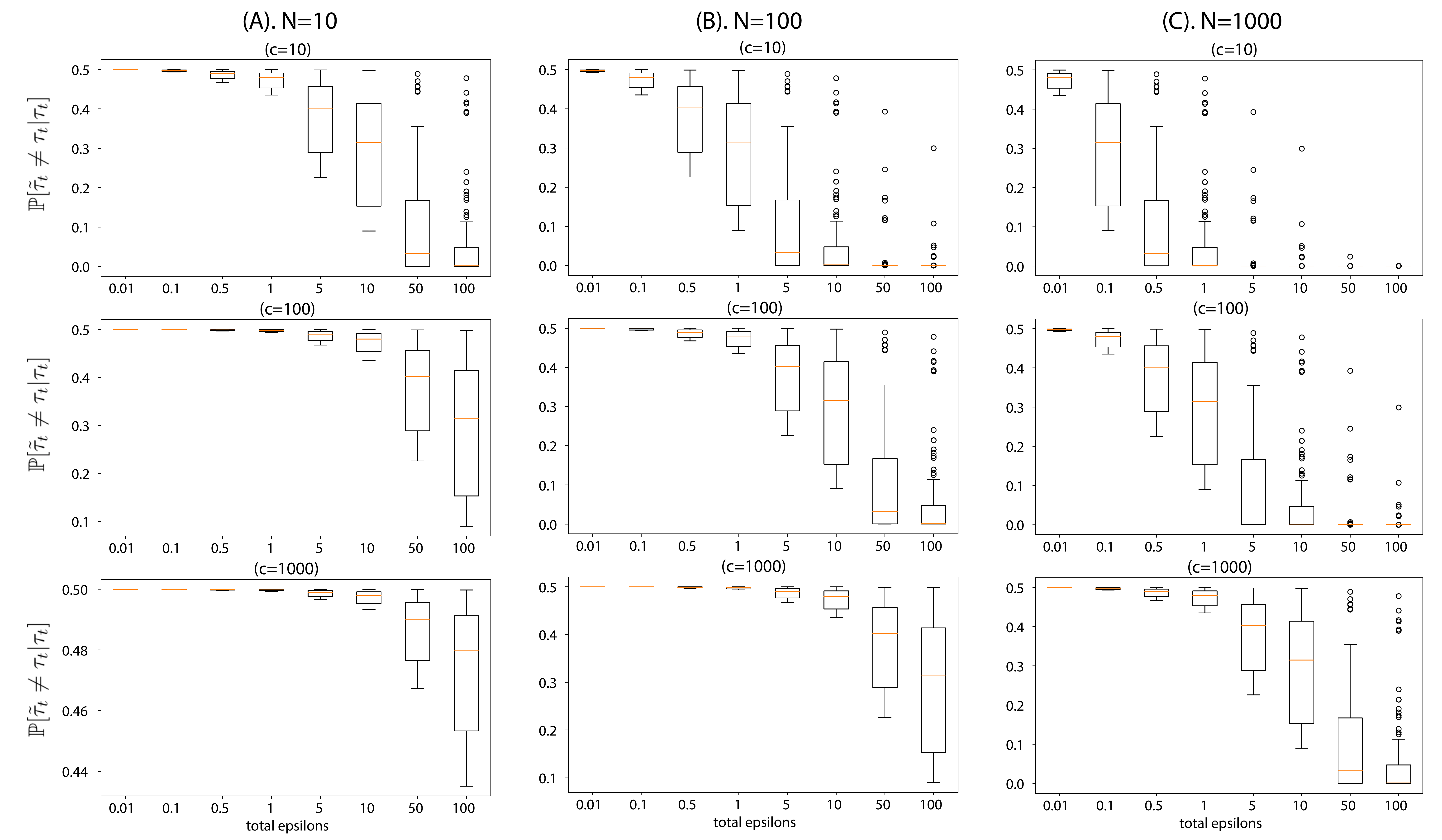}}
\caption{Visualization of flip probability derived in \propref{flip_prob}, the probability that the outputs of ABCDP and ABC differ given an output of ABC, with different dataset size $N$ and accepted posterior sample size $c$. We simulated $\rho \sim \mbox{Uniform}[0,1]$ (drew $100$ values for $\rho$) and used $\epsilon_{abc}=0.2$. \textbf{(A)} This column shows the flip probability at a regime of extremely small datasets, $N=10$. Top plot shows the probability at $c=10$, Middle plot at $c=100$, and Bottom plot at $c=1000$. In this regime, even $\epsilon_{total}=100$ cannot reduce the flip probability to perfectly zero when $c=10$. The flip probability remains high when we accept more samples i.e., $c=1000$. \textbf{(B)} The flip probability at $N=100$. \textbf{(C)} The flip probability at $N=1000$. As we increase the dataset size $N$ (moving from left to right columns), the flip probability approaches zero at a smaller privacy loss $\epsilon_{total}$.
}
\label{fig:flip_prob}
\vspace{-2mm}
\end{figure*}
A proof\footnote{The proof uses linear composition, i.e., the privacy level linearly degrading with $c$. Using the strong composition or more advanced compositions can reduce the resulting privacy loss, while these compositions turn pure-DP to a weaker, approximate-DP. In this paper, we focus on the pure-DP case of ABCDP.} is given in \suppsecref{proof_abcdp_Lap}. For the case of RESAMPLE = True, the proof directly follows the proof of the standard SVT algorithm using the linear composition method \cite{Dwork14}, with an exception that we utilize the quantity representing the minimum noisy value of any query evaluated on $Y^*$, as opposed to the maximum utilized in SVT. For the case of RESAMPLE= False, the proof follows the proof of Algorithm 1 in \cite{SVT_c1}.

Note that the DP analysis in \thmref{abcdp_Lap} holds for other types of distance metrics and not limited to only MMD, as long as there is a bounded sensitivity $\Delta_\rho$ under the chosen metric. When there is no bounded sensitivity, one could impose a clipping bound $C$ to the distance  by taking the distance from $\min[\rho(Y_t,Y^*), C]$, such that the resulting distance between any pseudo data $Y_t$ and $Y^{*'}$ with modifying one datapoint in $Y^*$ cannot exceed that clipping bound.
In fact, we use this trick in our experiments when there is no bounded sensitivity.

\subsection{Effect of noise added to ABC}
\label{sec:noise_flip}

Here, we would like to analyze the effect of noise added to ABC. In particular, we are interested in analyzing the probability that the output of ABCDP differs from that of ABC: $\mathbb{P}[\tilde\tau_t \neq \tau_t|\tau_t]$ at any given time $t$. To compute this probability, we first compute the probability density function of the random variables $m_t-\nu_t$ in the following Lemma.
\begin{lem}
  \label{lemma:pdf_Z}
Recall $m_t \sim \mbox{Lap}(b)$, $\nu_t \sim \mbox{Lap}(2b)$. Subtraction of these yields another random variable $Z = m_t - \nu_t$, where the PDF of $Z$ is given by
\begin{align}
  \label{eq:pdf_Z}
    f_Z(z) &= \frac{1}{6b}\left[2\exp\left(-\frac{|z|}{2b} \right) - \exp\left(-\frac{|z|}{b}\right)
    \right].
\end{align}
Further, for $a \ge 0$, $G_{b}(a):=\int_{a}^{\infty}f_{Z}(z)\thinspace\mathrm{d}z=\frac{1}{6}\left[4\exp\left(-\frac{a}{2b}\right)-\exp\left(-\frac{a}{b}\right)\right]$, and the CDF of Z is given by $F_Z(a) = H[a] + (1-2H[a])G_{b}(|a|)$ where $H[a]$ is the Heaviside step function.
\end{lem}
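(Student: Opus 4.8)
The plan is to get the density of $Z=m_t-\nu_t$ through characteristic functions, which sidesteps the case split that a direct convolution would require. Since the Laplace law is symmetric about $0$, $-\nu_t$ has the same distribution as $\nu_t$, so $Z$ is a sum of two independent Laplace variables with scales $b$ and $2b$. Recalling that the characteristic function of $\mbox{Lap}(\sigma)$ is $t\mapsto 1/(1+\sigma^2 t^2)$, independence gives $\varphi_Z(t)=\bigl[(1+b^2t^2)(1+4b^2t^2)\bigr]^{-1}$. A partial-fraction decomposition in the variable $u=b^2t^2$, namely $\tfrac{1}{(1+u)(1+4u)}=\tfrac{4/3}{1+4u}-\tfrac{1/3}{1+u}$, then yields
\begin{align*}
\varphi_Z(t) = \frac{4}{3}\cdot\frac{1}{1+4b^2t^2}-\frac{1}{3}\cdot\frac{1}{1+b^2t^2}.
\end{align*}

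Next I would invert term by term. The first term is $\tfrac43$ times the characteristic function of $\mbox{Lap}(2b)$ (whose density is $\frac{1}{4b}e^{-|z|/(2b)}$) and the second is $\tfrac13$ times the characteristic function of $\mbox{Lap}(b)$ (density $\frac{1}{2b}e^{-|z|/b}$). By linearity of the Fourier transform and uniqueness of the inverse, $f_Z(z)=\frac{4}{3}\cdot\frac{1}{4b}e^{-|z|/(2b)}-\frac{1}{3}\cdot\frac{1}{2b}e^{-|z|/b}$, which simplifies to \eqref{eq:pdf_Z}. As a cross-check one can also derive the same formula directly from the convolution $f_Z(z)=\int_{-\infty}^{\infty}\frac{1}{2b}e^{-|x|/b}\,\frac{1}{4b}e^{-|z-x|/(2b)}\,\mathrm{d}x$ by splitting $\mathbb{R}$ at $x=0$ and $x=z$; the characteristic-function route just avoids that bookkeeping.

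For the tail probability, note that when $a\ge 0$ the integration variable satisfies $z\ge a\ge 0$, so $|z|=z$ and
\begin{align*}
G_b(a)=\int_a^\infty \frac{1}{6b}\bigl(2e^{-z/(2b)}-e^{-z/b}\bigr)\,\mathrm{d}z=\frac{1}{6b}\bigl(4b\,e^{-a/(2b)}-b\,e^{-a/b}\bigr)=\frac{1}{6}\bigl(4e^{-a/(2b)}-e^{-a/b}\bigr),
\end{align*}
as claimed. Finally, the CDF follows from the evenness of $f_Z$: for $a<0$ we have $F_Z(a)=\mathbb{P}(Z\le a)=\mathbb{P}(Z\ge -a)=G_b(-a)=G_b(|a|)$, while for $a\ge 0$ we have $F_Z(a)=1-\mathbb{P}(Z>a)=1-G_b(a)$, and the two cases agree at $a=0$ since $F_Z(0)=\tfrac12=1-G_b(0)$. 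Collecting them using $H[a]=0$ for $a<0$ and $H[a]=1$ for $a\ge 0$ gives $F_Z(a)=H[a]+(1-2H[a])\,G_b(|a|)$. The whole argument is routine; the only spots that need care are the partial-fraction step (equivalently, remembering that $1/(1+4b^2t^2)$ inverts to a $\mbox{Lap}(2b)$ density, not $\mbox{Lap}(b/2)$) and the sign bookkeeping when folding the piecewise CDF into the single Heaviside expression, so I do not expect a genuine obstacle.
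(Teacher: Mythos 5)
Your proof is correct, and it reaches the density by a genuinely different route from the paper. The paper computes the convolution $f_{m_t-\nu_t}(z)=\int f_{m_t}(x)f_{\nu_t}(x-z)\,\mathrm{d}x$ directly, splitting the real line at $x=0$ and $x=z$ and handling the cases $z\ge 0$ and $z<0$ separately, which occupies about a page of elementary but fiddly integration. You instead observe that $-\nu_t\eqd\nu_t$ by symmetry, multiply characteristic functions to get $\varphi_Z(t)=\left[(1+b^2t^2)(1+4b^2t^2)\right]^{-1}$, and invert via the partial-fraction identity $\tfrac{1}{(1+u)(1+4u)}=\tfrac{4/3}{1+4u}-\tfrac{1/3}{1+u}$; the coefficients and the resulting mixture $\tfrac{4}{3}\cdot\tfrac{1}{4b}e^{-|z|/(2b)}-\tfrac{1}{3}\cdot\tfrac{1}{2b}e^{-|z|/b}$ check out and agree with \eqref{eq:pdf_Z}. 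What your approach buys is brevity and an automatic handling of the sign cases (the absolute value appears for free in the inverted Laplace densities); what the paper's approach buys is self-containedness, needing no Fourier inversion theorem. Your treatment of $G_b$ and of the piecewise CDF folded into the Heaviside expression is essentially identical to the paper's, including the exploitation of the evenness of $f_Z$ and the consistency check at $a=0$ where $G_b(0)=\tfrac12$.
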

See \suppsecref{proof_pdf_z} for the proof.
Using this PDF, we now provide the following proposition:
\begin{prop}\label{prop:flip_prob}
Denote the output of  \algoref{ABCDP} at time $t$ by $\tilde\tau_t  \in \{0,1 \}$ and the output of ABC by $\tau_t  \in \{0,1 \}$. The flip probability, the probability that the outputs of ABCDP and ABC differ given the output of ABC, is given by $\mathbb{P}[\tilde\tau_t \neq \tau_t|\tau_t] = G_{b}(|\rho_t - \epsilon_{abc}|)$,
where $G_{b}(a)$ is defined in Lemma \ref{lemma:pdf_Z}, and $\rho_t := \rho(Y^*, Y_t)$.
\end{prop}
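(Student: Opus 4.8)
The plan is to reduce the flip event to a one-sided tail event for the single random variable $Z = m_t - \nu_t$, whose distribution is supplied by Lemma~\ref{lemma:pdf_Z}, and then to use the symmetry of $f_Z$. First I would write the two decision rules side by side. Non-private rejection ABC outputs $\tau_t = 1$ exactly when $\rho_t \le \epsilon_{abc}$ and $\tau_t = 0$ otherwise; ABCDP (line~6 of \algoref{ABCDP}) outputs $\tilde\tau_t = 1$ exactly when $\hat\rho_t \le \hat\epsilon_{abc}$. Substituting $\hat\rho_t = \rho_t + \nu_t$ from \eqref{nu_t} and $\hat\epsilon_{abc} = \epsilon_{abc} + m_t$ from \eqref{m_t} and rearranging gives the equivalence $\tilde\tau_t = 1 \iff m_t - \nu_t \ge \rho_t - \epsilon_{abc} \iff Z \ge \rho_t - \epsilon_{abc}$. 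Since $m_t \sim \mbox{Lap}(b)$ and $\nu_t \sim \mbox{Lap}(2b)$ are independent, $Z$ has the density $f_Z$ of Lemma~\ref{lemma:pdf_Z}, which is an even function of $z$; in particular $Z$ is continuous, so $\mathbb{P}[Z = a] = 0$ and the distinction between strict and non-strict inequalities is immaterial throughout.

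Next I would condition on the value of $\tau_t$, which is a deterministic function of $\rho_t$ (equivalently of $Y_t$). If $\tau_t = 1$ then $\rho_t \le \epsilon_{abc}$, so $\rho_t - \epsilon_{abc} = -|\rho_t - \epsilon_{abc}|$, and a disagreement means $\tilde\tau_t = 0$, i.e.\ $Z < \rho_t - \epsilon_{abc} = -|\rho_t - \epsilon_{abc}|$; hence $\mathbb{P}[\tilde\tau_t \neq \tau_t \mid \tau_t] = \mathbb{P}[Z < -|\rho_t - \epsilon_{abc}|]$. If instead $\tau_t = 0$ then $\rho_t > \epsilon_{abc}$, so $\rho_t - \epsilon_{abc} = |\rho_t - \epsilon_{abc}|$, and a disagreement means $\tilde\tau_t = 1$, i.e.\ $Z \ge \rho_t - \epsilon_{abc} = |\rho_t - \epsilon_{abc}|$; hence $\mathbb{P}[\tilde\tau_t \neq \tau_t \mid \tau_t] = \mathbb{P}[Z \ge |\rho_t - \epsilon_{abc}|]$.

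Finally I would invoke the evenness of $f_Z$: for $a \ge 0$, $\mathbb{P}[Z < -a] = \mathbb{P}[Z > a] = \int_a^\infty f_Z(z)\,\mathrm{d}z = G_b(a)$, where $G_b$ is precisely the function computed in Lemma~\ref{lemma:pdf_Z}. Applying this with $a = |\rho_t - \epsilon_{abc}| \ge 0$ collapses both cases to the common value $G_b(|\rho_t - \epsilon_{abc}|)$, which is the assertion of the proposition. I do not expect any genuine obstacle: the only care needed is in the sign bookkeeping when rewriting $\rho_t - \epsilon_{abc}$ as $\pm|\rho_t - \epsilon_{abc}|$, and in noting that in the RESAMPLE $=$ True branch the threshold noise $m_t$ active at step $t$ is still a fresh $\mbox{Lap}(b)$ draw independent of $\nu_t$, so the per-step marginal computation above is unaffected by which RESAMPLE option is used.
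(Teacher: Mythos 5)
Your proposal is correct and follows essentially the same route as the paper: reduce the flip event to the tail event $\{Z \ge \rho_t-\epsilon_{abc}\}$ or $\{Z < \rho_t-\epsilon_{abc}\}$ for $Z=m_t-\nu_t$, treat the two cases $\tau_t=0$ and $\tau_t=1$ separately, and combine them into $G_b(|\rho_t-\epsilon_{abc}|)$. The only (harmless) difference is that you invoke the evenness of $f_Z$ and the definition of $G_b$ from Lemma~\ref{lemma:pdf_Z} to collapse the two cases, whereas the paper evaluates both one-sided integrals explicitly before observing they merge under an absolute value.
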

See \suppsecref{proof_flip_prob} for proof.

To provide an intuition of \propref{flip_prob}, we visualize the flip probability  in \figref{flip_prob}.
This flip probability provides a guideline for choosing the accepted sample size $c$ given the datasize $N$ and the desired privacy level $\epsilon_{total}$.
For instance, if a given dataset is extremely small, e.g., containing datapoints on the order of $10$, $c$ has to be chosen such that the flip probability of each posterior sample remains low for a given privacy guarantee ($\epsilon_{total}$). If a higher number of posterior samples are needed, then one has to reduce the desired privacy level for the posterior sample of ABCDP to be similar to that of ABC. Otherwise, with a small $\epsilon_{total}$ with a large $c$, the accepted posterior samples will be poor.
On the other hand, if the dataset is bigger, then a larger $c$ can be taken for a reasonable level of privacy.


\begin{figure*}[t]
\centering{\includegraphics[width=0.9\textwidth]{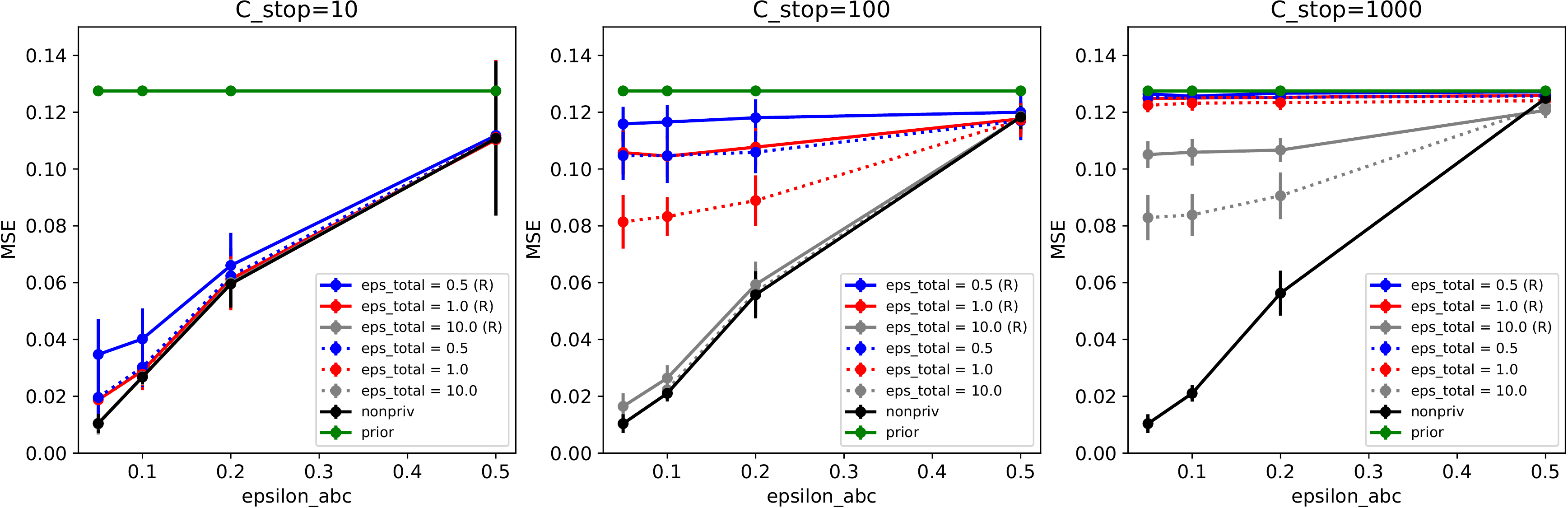}}
\caption{\textbf{ABCDP on Synthetic data}. Mean-squared error (between true parameters and posterior mean) as a function of similarity threshold $\epsilon_{abc}$ given each privacy level. We ran ABCDP with options, \textit{RESAMPLE=True} (denoted by R and solid line) or \textit{RESAMPLE=False} (without R and dotted line) for $60$ independent runs.
\textbf{Left:} When $c_{stop}=10$ at different values of $\epsilon_{abc}$,  ABCDP and non-private ABC (black trace) achieved the highest accuracy (lowest MSE) at the smallest $\epsilon_{abc}$ ($\epsilon_{abc}=0.01$). Notice that ABCDP \textit{RESAMPLE=False} (dotted) outperforms ABCDP \textit{RESAMPLE=True} (solid) for the same privacy tolerance ($\epsilon_{total}$) at small values of $\epsilon_{abc}$.
\textbf{Middle:}  MSE for $c_{stop}=100$ at different values of $\epsilon_{abc}$.
\textbf{Right:}  MSE for $c_{stop}=1000$ at different values of $\epsilon_{abc}$. We can observe when $\epsilon_{abc}$ is large, ABCDP (grey) marginally outperforms non-private ABC (black) due to the excessive noise added in ABCDP. }
\label{fig:toy_rej}
\vspace{-1mm}
\end{figure*}

\subsection{Convergence of Posterior Expectation of Rejection-ABCDP to Rejection-ABC.}
\newcommand{\btheta}{\boldsymbol{\theta}}
The flip probability studied in Section \ref{sec:noise_flip} only accounts
for the effect of noise added to a single output of ABCDP.
Building further on this result, we analyze the discrepancy between the posterior expectations derived from ABCDP and from the rejection ABC.
This analysis requires quantifying the effect of noise added to the whole
sequence of outputs of ABCDP. The result is presented in Thm.
\ref{thm:sparse_posterior_bound}.

%


\begin{thm}
\label{thm:sparse_posterior_bound}
Given $Y^{*}$ of size $N$,
and $\{(\boldsymbol{\theta}_{t},Y_{t})\}_{t=1}^{T}$ as input, let
$\tilde{\tau}_{t}\in\{0,1\}$ be the output from Algorithm \ref{algo:ABCDP}
where $\tilde{\tau}_{t}=1$ indicates that $(\btheta_{t},Y_{t})$
is accepted, for $t=1,\ldots,T$. Similarly, let $\tau_{t}$ denote
the output from the traditional rejection ABC algorithm, for $t=1,\ldots,T$.
Let $f$ be an arbitrary vector-valued function of $\btheta$. Assume
that the numbers of accepted samples from Algorithm \ref{algo:ABCDP},
and the traditional rejection ABC algorithm are $c:=\sum_{t=1}^{T}\tilde{\tau}_{t}\ge1$
and $c':=\sum_{t=1}^{T}\tau_{t}\ge1$, respectively. Let $b=\frac{4c\sqrt{B}_{k}}{\epsilon_{total}N}$
if RESAMPLE=True, and $b=\frac{2(c+1)\sqrt{B_{k}}}{\epsilon_{total}N}$
if RESAMPLE=False (see Theorem \ref{thm:abcdp_Lap}). Define $K_{T}:=\max_{t=1,\ldots,T}\|f(\boldsymbol{\theta}_{t})\|_{2}$.
Then, the following statements hold for both RESAMPLE options:

1. $\mathbb{E}_{\tilde{\tau}_{1},\ldots,\tilde{\tau}_{T}}\bigg\|\frac{1}{c}\sum_{t=1}^{T}f(\btheta_{t})\tilde{\tau}_{t}-\frac{1}{c'}\sum_{t=1}^{T}f(\btheta_{t})\tau{}_{t}\bigg\|_{2}\le\frac{2K_{T}}{c'}\sum_{t=1}^{T}G_{b}(|\rho_{t}-\epsilon_{abc}|),$
where the decreasing function $G_{b}(x)\in(0,\frac{1}{2}]$ for any
$x\ge0$ is defined in Lemma \ref{lemma:pdf_Z}; \vspace{2mm}

2. $\mathbb{E}_{\tilde{\tau}_{1},\ldots,\tilde{\tau}_{T}}\bigg\|\frac{1}{c}\sum_{t=1}^{T}f(\btheta_{t})\tilde{\tau}_{t}-\frac{1}{c'}\sum_{t=1}^{T}f(\btheta_{t})\tau{}_{t}\bigg\|_{2}\to0$ as $N\to\infty$;
\vspace{2mm}

3. For any $a>0$,
 $ P\bigg(\bigg\|\frac{1}{c}\sum_{t=1}^{T}f(\btheta_{t})\tilde{\tau}_{t}-\frac{1}{c'}\sum_{t=1}^{T}f(\btheta_{t})\tau{}_{t}\bigg\|_{2}\le a\bigg)\ge
  1-\frac{4K_{T}}{3ac'}\sum_{t=1}^{T}\exp\left(-\frac{|\rho_{t}-\epsilon_{abc}|}{2b}\right)$
where the probability is taken with respect to $\tilde{\tau}_{1},\ldots,\tilde{\tau}_{T}$.
\end{thm}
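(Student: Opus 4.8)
The plan is to reduce all three parts of the theorem to a single deterministic, sample-path inequality, and then feed that inequality into Proposition~\ref{prop:flip_prob}, the closed forms in Lemma~\ref{lemma:pdf_Z}, and Markov's inequality. Write $A:=\sum_{t=1}^{T}f(\btheta_{t})\tilde{\tau}_{t}$ and $B:=\sum_{t=1}^{T}f(\btheta_{t})\tau_{t}$, so the quantity of interest is $\|\tfrac{1}{c}A-\tfrac{1}{c'}B\|_{2}$. First I would telescope as $\tfrac{1}{c}A-\tfrac{1}{c'}B=\tfrac{1}{c'}(A-B)+\big(\tfrac{1}{c}-\tfrac{1}{c'}\big)A$ — crucially keeping $A$, not $B$, in the second term — and bound the two pieces with the triangle inequality together with $\|f(\btheta_{t})\|_{2}\le K_{T}$: namely $\|A-B\|_{2}\le K_{T}\sum_{t}|\tilde{\tau}_{t}-\tau_{t}|$, $\|A\|_{2}\le K_{T}\sum_{t}\tilde{\tau}_{t}=cK_{T}$, and $\big|\tfrac1c-\tfrac1{c'}\big|=\tfrac{|c-c'|}{cc'}$ with $|c-c'|=\big|\sum_{t}(\tilde{\tau}_{t}-\tau_{t})\big|\le\sum_{t}|\tilde{\tau}_{t}-\tau_{t}|$. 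The factors of $c$ then cancel and one is left with the realization-wise bound
\[
\Big\|\tfrac{1}{c}\sum_{t=1}^{T}f(\btheta_{t})\tilde{\tau}_{t}-\tfrac{1}{c'}\sum_{t=1}^{T}f(\btheta_{t})\tau_{t}\Big\|_{2}\;\le\;\frac{2K_{T}}{c'}\sum_{t=1}^{T}|\tilde{\tau}_{t}-\tau_{t}|,
\]
valid for every realization of $\tilde{\tau}_{1},\ldots,\tilde{\tau}_{T}$.

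For Part 1, I would take the expectation over $\tilde{\tau}_{1},\ldots,\tilde{\tau}_{T}$ of both sides. Each $|\tilde{\tau}_{t}-\tau_{t}|$ is $0$ or $1$ and equals $1$ exactly when $\tilde{\tau}_{t}\neq\tau_{t}$, and $\tau_{t}$ (hence $\rho_{t}$ and $c'$) is a deterministic function of the given data, so $\mathbb{E}\,|\tilde{\tau}_{t}-\tau_{t}|=\mathbb{P}[\tilde{\tau}_{t}\neq\tau_{t}]=\mathbb{P}[\tilde{\tau}_{t}\neq\tau_{t}\mid\tau_{t}]=G_{b}(|\rho_{t}-\epsilon_{abc}|)$ by Proposition~\ref{prop:flip_prob}. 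Linearity of expectation applied to the pointwise bound then gives Part~1 verbatim; note that any dependence among the $\tilde{\tau}_{t}$ (e.g.\ through the shared threshold noise $m_{t}$ when RESAMPLE is False) is irrelevant here, since only marginal expectations enter.

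For Part 2, I would substitute the prescribed noise scale, which in both RESAMPLE cases equals a constant divided by $N$ (see Theorem~\ref{thm:abcdp_Lap}), so $b\to0$ as $N\to\infty$ with $c,\epsilon_{total},B_{k}$ and the $\rho_{t}$ held fixed. By Lemma~\ref{lemma:pdf_Z}, $G_{b}(x)=\tfrac16\big(4e^{-x/2b}-e^{-x/b}\big)\to0$ as $b\to0$ for each fixed $x>0$; assuming $\rho_{t}\neq\epsilon_{abc}$ for all $t$ (the generic case), every one of the finitely many summands in the Part~1 bound tends to $0$, hence so does its right-hand side, and since the left-hand side is nonnegative, Part~2 follows by squeezing. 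For Part 3, set $X:=\frac{2K_{T}}{c'}\sum_{t}|\tilde{\tau}_{t}-\tau_{t}|\ge0$; the pointwise bound says the left-hand norm is $\le X$ surely, so $\mathbb{P}\big(\|\cdot\|_{2}\le a\big)\ge\mathbb{P}(X\le a)\ge1-\mathbb{E}[X]/a$ by Markov. Since $\mathbb{E}[X]=\frac{2K_{T}}{c'}\sum_{t}G_{b}(|\rho_{t}-\epsilon_{abc}|)$ from the Part~1 computation, and the elementary estimate $G_{b}(x)=\tfrac16\big(4e^{-x/2b}-e^{-x/b}\big)\le\tfrac23e^{-x/2b}$ holds by discarding the nonnegative term $e^{-x/b}$, we obtain $\mathbb{E}[X]/a\le\frac{4K_{T}}{3ac'}\sum_{t}\exp(-|\rho_{t}-\epsilon_{abc}|/2b)$, which is precisely Part~3.

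The only genuinely fiddly step is the bookkeeping in the first paragraph: the telescoping must be arranged so that the surviving normalization is $1/c'$ rather than $1/c$ and so that all dependence on the random count $c$ cancels, which is exactly why the split is taken with $A$ in the mixed term and why one uses $\|A\|_{2}\le cK_{T}$. One should also keep in mind that $c=\sum_{t}\tilde{\tau}_{t}\ge1$ is being assumed (equivalently, the analysis is implicitly conditioned on ABCDP accepting at least one sample), so that $1/c$ is well defined. Everything else is carried by Proposition~\ref{prop:flip_prob} and the explicit $G_{b}$ of Lemma~\ref{lemma:pdf_Z}, and no concentration tool beyond Markov's inequality is needed.
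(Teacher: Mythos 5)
Your proposal is correct and follows essentially the same route as the paper's proof: the identical telescoping decomposition yielding the pointwise bound $\frac{2K_{T}}{c'}\sum_{t}|\tilde{\tau}_{t}-\tau_{t}|$, the flip probability $G_{b}(|\rho_{t}-\epsilon_{abc}|)$ for the marginal expectations, and Markov's inequality with the estimate $G_{b}(x)\le\frac{2}{3}e^{-x/2b}$ for the tail bound. Your explicit caveat in Part~2 that $\rho_{t}\neq\epsilon_{abc}$ is needed (since $G_{b}(0)=\tfrac12$ for every $b$) is a small point of rigor the paper leaves implicit.
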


Thm. \ref{thm:sparse_posterior_bound} contains three statements.
The first states that the expected error between the two posterior expectations of an arbitrary function $f$
is bounded by a constant factor of the sum of the flip
probability in each rejection/acceptance step.
As we have seen in Section \ref{sec:noise_flip}, the flip probability
is determined by the scale parameter $b$ of the Laplace distribution.
Since $b = O(1/N)$ (see Theorem \ref{thm:abcdp_Lap} and Lemma \ref{lem:deltammd}), it follows that
the expected error decays as $N$ increases, giving the second statement.

The third statement gives a probabilistic bound on the error.
The bound guarantees that the error decays exponentially in $N$.
Our proof relies on establishing an upper bound on the error as a
function of the total number of flips $\sum_{t=1}^T |\tilde{\tau}_t - \tau_t|$
which is a random variable. Bounding the error of interest then amounts to characterizing the tail behavior of this quantity.
Observe that in Thm \ref{thm:sparse_posterior_bound} we consider the ABCDP and rejection ABC with the same computational budget i.e.,  the same total number of
iterations $T$ performed. However, the number of accepted samples may be
different in each case ($c$ for ABCDP and $c'$ for reject ABC). The fact that $c$ itself
is a random quantity due to injected noise presents its own technical challenge in
the proof. Our proof can be found in \suppsecref{proof_sparse_posteriior_bound}.

\section{Related Work}
Combining DP with ABC is relatively novel.
The only related work is \cite{gong2019exact}, which states
that a rejection ABC algorithm produces
posterior samples from the exact posterior distribution given perturbed data, when the kernel and bandwidth of rejection ABC are chosen in line with the data perturbation mechanism.
The focus of \cite{gong2019exact} is to identify the condition when the posterior becomes exact in terms of the kernel and bandwidth of the kernel through the lens of data perturbation.
On the other hand, we use the sparse vector technique to reduce the total privacy loss. The resulting theoretical studies including the flip probability and the error bound on the posterior expectation are new.

\section{Experiments}

\begin{figure}[t]
\begin{center}
\subfloat[True parameters]{
\includegraphics[trim={8 85 30 8},clip,width=0.35\textwidth]{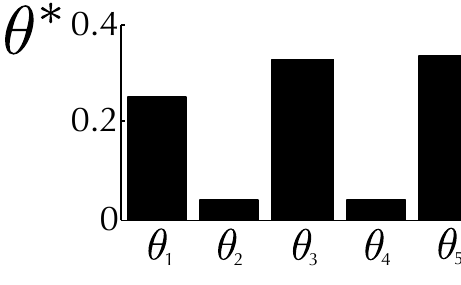}
}
\hspace{14mm}
\subfloat[Observations]{
\includegraphics[trim={8 3 10 90},clip,width=0.35\textwidth]{figs/toy.pdf}
}
\end{center}
\caption{
    Synthetic data. \textbf{(a)}: 5-dimensional true parameters. \textbf{(b)}:
Observations sampled from the mixture of uniform distributions in
(\ref{eq:likelihood_mixture_uniform}) with the true  parameters.
}
\label{fig:toy}
\vspace{-2mm}
\end{figure}

\subsection{Toy Examples}


We start by investigating the interplay between $\epsilon_{abc}$ and $\epsilon_{total}$, in a synthetic dataset where the ground truth parameters are known.
Following \cite{ParJitSej2016}, we also consider a symmetric Dirichlet prior $\pi$ and a likelihood $p(y|\theta)$ given by a mixture of uniform distributions as
\begin{align}\label{eq:likelihood_mixture_uniform}
    \pi(\theta) &= \text{Dirichlet}(\theta; \boldsymbol{1}), \nonumber \\
    p(y|\theta) &= \sum_{i=1}^5 \theta_i \text{Uniform}(y; [i-1, i]).
\end{align}
A vector of mixing proportions is our model parameters $\theta$, where
the ground truth is $\theta^* = [0.25, 0.04, 0.33, 0.04, 0.34]^\top$ (see \figref{toy}).
The goal is to estimate $\mathbb{E}[\theta|Y^*]$ where $Y^*$ is generated with $\theta^*$.

We first generated $5000$ samples for $Y^*$ drawn from
(\ref{eq:likelihood_mixture_uniform}) with true parameters $\theta^*$. Then we tested our two
ABCDP frameworks with varying $\epsilon_{abc}$ and $\epsilon_{total}$.  In
these experiments, we set $\rho =\widehat{\mathrm{MMD}} $ with a Gaussian kernel. We set the
bandwidth of the Gaussian kernel using the median heuristic computed on the
simulated data (i.e., we did not use the real data for this, hence there is no
privacy violation in this regard).

We drew $5000$ pseudo-samples for $Y_t$ at each time.
We tested various settings, as shown in \figref{toy_rej} where we vary the number of posterior samples, $c=\{10,100,1000\}$,  $\epsilon_{abc}=\{0.05, 0.1, 0.2, 0.5\}$
and $\epsilon_{total}=\{0.5, 1.0, 10, \infty\}$.
We showed the result of ABCDP for both RESAMPLE options in \figref{toy_rej}.


\begin{figure}[t]
\centering{\includegraphics[width=0.65\textwidth]{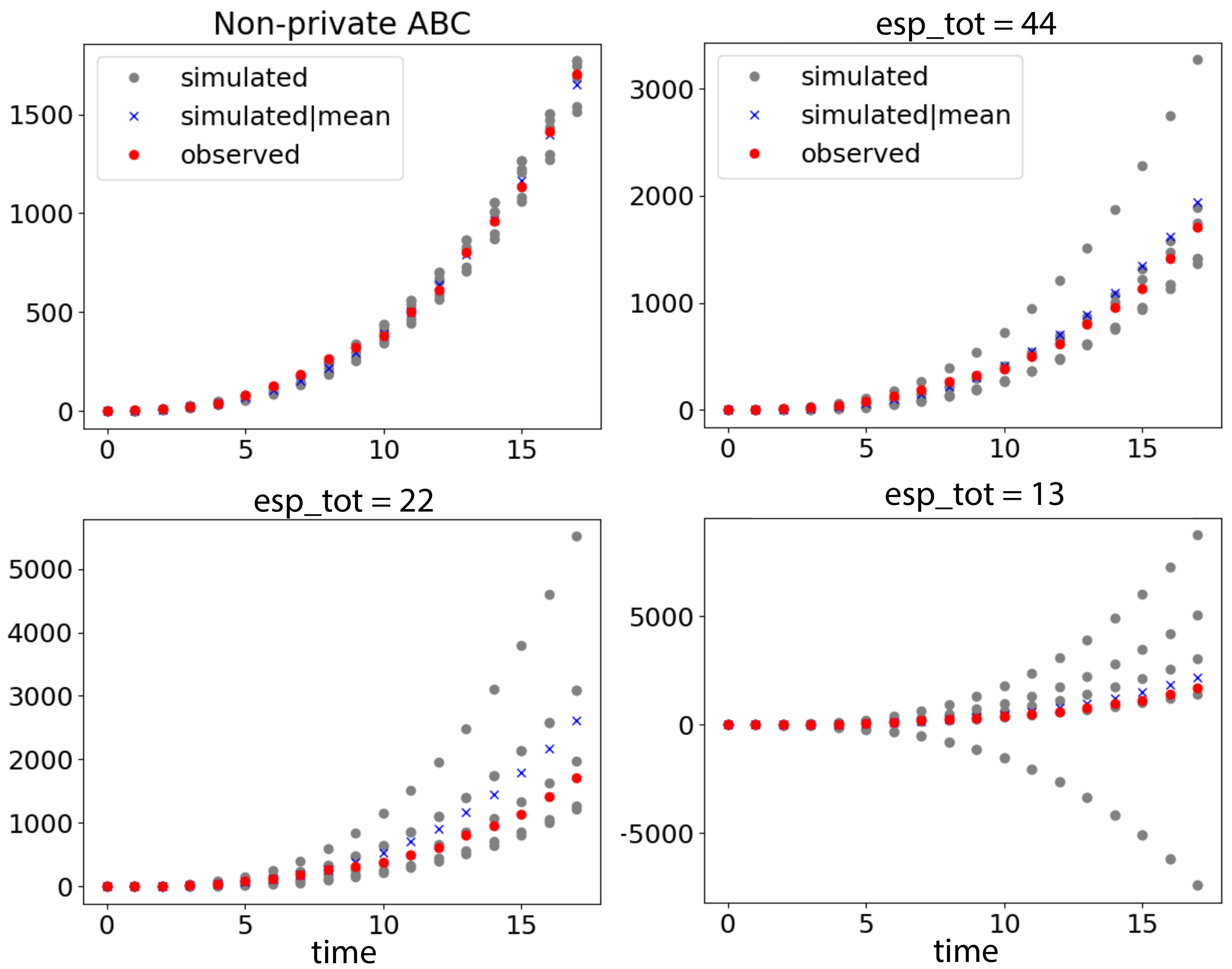}}
\caption{Covid-19 outbreak data ($N=18$) and simulated data  under different privacy guarantees. Red dots show observed data, and grey dots show simulated data drawn from $5$ posterior samples accepted in each case. The blue crosses are simulated data given the posterior mean in each case. \textbf{Top left:} Simulated data by non-private ABC. \textbf{Top right:} Simulated data by ABCDP with $\epsilon_{total}=44$
are relatively well aligned with observed due to the extremely small size of data. \textbf{Bottom left:} The simulated data given $5$ posterior samples exhibit a large variance when $\epsilon_{total}=22$. \textbf{Bottom right:} When $\epsilon_{total}=13$, the simulated data  exhibit an excessively large variance.
}
\label{fig:covid_pred_trj}
\vspace{-2mm}
\end{figure}

    \begin{figure*}[t]
  \begin{center}
    \includegraphics[width=0.75\linewidth]{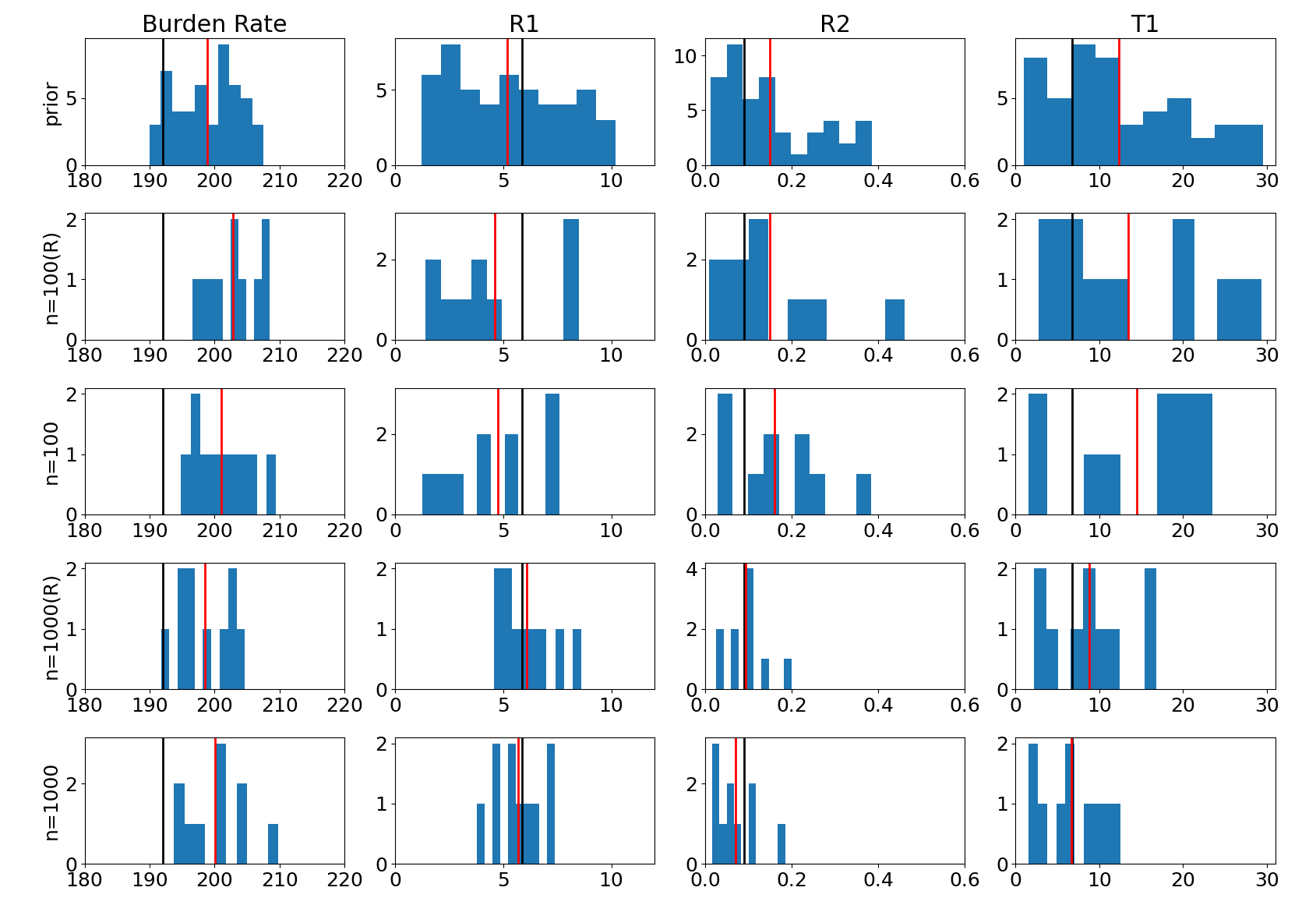}
  \end{center}
  \caption{Posterior samples for Modelling Tuberculosis (TB) Outbreak. In all ABCDP methods, we set $\epsilon_{total}=1$. True values in Black. Mean of samples in Red. (R) indicates ABCDP with Resampling=True. \textbf{1st row}: Histogram of $50$ samples drawn from the prior (we used the same prior as \cite{TB2019}).  \textbf{2nd row}: $10$ posterior samples from ABCDP with (R) given $n=100$ observations.
  \textbf{3rd row}: $10$ posterior samples from ABCDP without (R) given $n=100$ observations.
  \textbf{4th row}: $10$ posterior samples from ABCDP with (R) given $n=1000$ observations.
  \textbf{5th row}: $10$ posterior samples from ABCDP without (R) given $n=1000$ observations.
  The distance between the black bar (true) and red bar (estimate) reduces as the size of data increases from $100$ to $1000$. ABCDP with Resampling=False performs better regardless of the data size.
  }
  \label{fig:TB}
  \end{figure*}

\subsection{Coronavirus Outbreak Data}
In this experiment, we model  coronavirus outbreak in the Netherlands using
a polynomial model consisting of four parameters $a_0, a_1, a_2, a_3$, which we aim to infer, where
\begin{align}
y(t)= a_3 + a_2 t + a_1 t^2 + a_0 t^3.
\end{align}
The observed\footnote{{https://www.ecdc.europa.eu/en/publications-data/download-todays-data-geographic-distribution-covid-19-cases-worldwide}.
} data are the number of cases of the coronavirus outbreak from Feb 27 to March 17, 2020, which amounts to 18 datapoints ($N=18$). The presented experiment imposes privacy concern as each datapoint is a count of the individuals who are COVID positive at each time.
The goal is to identify the
approximate posterior distribution $\tilde{p}(a_0, a_1, a_2, a_3|y^*)$ over these parameters given a set of
observations.

Recalling from \figref{flip_prob} that the small size of data worsens the privacy and accuracy trade-off,  the inference is restricted to a small number of posterior samples (we chose $c=5$) since the number of datapoints is extremely limited in this dataset .
We used the same prior distributions for the four parameters as $a_i \sim \Nrm(0, 1)$ for all $i=0,1,2,3$. We drew $50,000$ samples from the Gaussian prior, and performed our ABCDP algorithm with $\epsilon_{total}=\{13, 22, 44\}$ and $\epsilon_{abc}=0.1$, as shown in \figref{covid_pred_trj}.

\subsection{Modelling Tuberculosis (TB) Outbreak Using Stochastic Birth-Death Models}

In this experiment, we used the  stochastic birth-death models to model  Tuberculosis (TB) outbreak.
%
There are four parameters that we aim to infer, which go into the Communicable
disease outbreak simulator as inputs: burden rate $\beta$, transmission rate
$t_1$, reproductive numbers $R_1$ and $R_2$. The goal is to identify the
approximate posterior distribution $\tilde{p}(R_1, t_1, R_2, \beta|y^*)$ over these parameters given a set of
observations. Please refer to Sec 3 in \cite{TB2019} for the description of the birth-death process of the model.
We used the same prior distributions for the four parameters as in \cite{TB2019}: $\beta \sim \Nrm(200, 30)$, $R_1\sim\mbox{Unif}(1.01, 20)$, $R_2\sim\mbox{Unif}(0.01, (1-0.05R_1)/0.95)$, $t_1 \sim \mbox{Unif}(0.01, 30)$.

%
To illustrate the privacy and accuracy trade-off, we first generated two sets
of observations $y^*$ ($n=100$ and $n=1000$)
by some \textit{true} model
parameters (shown as black bars in \figref{TB}). We then tested our ABCDP algorithm
with a privacy level $\epsilon=1$. We used the summary statistic described in
Table 1 in \cite{TB2019} and used weighted L2 distance as $\rho$ as done in \cite{TB2019}, together with $\epsilon_{abc}=150$.
Since there is no bounded sensitivity in this case, we impose an artificial boundedness by clipping the distance by $C$ (we set $C=200$) when the distance goes beyond $C$.

As an error metric, we computed the mean absolute distance between each
posterior mean and the true parameter values. The top row in \figref{TB}
shows that the mean of the prior (red) is far from the true value (black) we
chose. As we increase the data size from $n=100$ (middle) to $n=1000$
(bottom), the distance between true values and estimates reduces, reflected in
the error from $4.71$ to $2.20$ for RESAMPLE=True, and from $4.51$ to $2.10$ for RESAMPLE=False.


\section{Summary and Discussion}

We presented ABCDP algorithm by combining DP with ABC. Our method outputs differentially private binary indicators, yielding differentially-private posterior samples. We derived the error in ABCDP in terms of indicator's flip probability, and also the average error bound of the posterior expectation.
A natural by-product of ABCDP is differentially private synthetic data, as the simulator is a public tool that anybody can run and hence differentially private posterior samples suffice differentially private synthetic data without any further privacy cost. Applying ABCDP to generating complex datasets is an intriguing future direction.


\section*{Acknowledgement}

M. Park and M. Vinaroz are supported by the Max Planck Society and the Gips Schule Foundation and the Institutional ¨
Strategy of the University of T\"ubingen (ZUK63). M. Vinaroz
thanks the International Max Planck Research School for
Intelligent Systems (IMPRS-IS) for its support.

\newpage

\bibliographystyle{plain}
\bibliography{abcdp}

\begin{thebibliography}{10}

\bibitem{Aeschbacher12}
S.~Aeschbacher, M.~A. Beaumont, and A.~Futschik.
\newblock {A Novel Approach for Choosing Summary Statistics in Approximate
  Bayesian Computation}.
\newblock {\em Genetics}, 192(3):1027--1047, 2012.

\bibitem{Bazinetal_2010}
E.~Bazin, K.~J. Dawson, and M.~A. Beaumont.
\newblock Likelihood-free inference of population structure and local
  adaptation in a {Bayesian} hierarchical model.
\newblock {\em Genetics}, 185(2):587--602, 06 2010.

\bibitem{Chaudhuri_11}
Kamalika Chaudhuri, Claire Monteleoni, and Anand~D. Sarwate.
\newblock Differentially private empirical risk minimization.
\newblock {\em JMLR}, 12:1069--1109, July 2011.

\bibitem{drovandi2015}
C.C. Drovandi, A.N. Pettitt, and A.~Lee.
\newblock Bayesian indirect inference using a parametric auxiliary model.
\newblock {\em Statist. Sci.}, 30(1):72--95, 02 2015.

\bibitem{dwork2006our}
Cynthia Dwork, Krishnaram Kenthapadi, Frank McSherry, Ilya Mironov, and Moni
  Naor.
\newblock Our data, ourselves: Privacy via distributed noise generation.
\newblock In {\em Eurocrypt}, volume 4004, pages 486--503. Springer, 2006.

\bibitem{dwork2006calibrating}
Cynthia Dwork, Frank McSherry, Kobbi Nissim, and Adam Smith.
\newblock Calibrating noise to sensitivity in private data analysis.
\newblock In {\em TCC}, volume 3876, pages 265--284. Springer, 2006.

\bibitem{Dwork14}
Cynthia Dwork and Aaron Roth.
\newblock The algorithmic foundations of differential privacy.
\newblock {\em Found. Trends Theor. Comput. Sci.}, 9:211--407, August 2014.

\bibitem{Fearnhead2012}
P.~Fearnhead and D.~Prangle.
\newblock {Constructing summary statistics for approximate Bayesian
  computation: semi-automatic approximate Bayesian computation}.
\newblock {\em J. R. Stat. Soc. Series B}, 74(3):419--474, 2012.

\bibitem{gong2019exact}
Ruobin Gong.
\newblock Exact inference with approximate computation for differentially
  private data via perturbations, 2019.

\bibitem{Gretton2012}
A.~Gretton, K.~Borgwardt, M.~Rasch, B.~Sch\"{o}lkopf, and A.~Smola.
\newblock A kernel two-sample test.
\newblock {\em J. Mach. Learn. Res.}, 13:723--773, 2012.

\bibitem{Homer2008}
Nils Homer, Szabolcs Szelinger, Margot Redman, David Duggan, Waibhav Tembe,
  Jill Muehling, John~V. Pearson, Dietrich~A. Stephan, Stanley~F. Nelson, and
  David~W. Craig.
\newblock Resolving individuals contributing trace amounts of {DNA} to highly
  complex mixtures using high-density {SNP} genotyping microarrays.
\newblock {\em PLOS Genetics}, 4(8):1--9, 08 2008.

\bibitem{Johnson2013}
Aaron Johnson and Vitaly Shmatikov.
\newblock Privacy-preserving data exploration in genome-wide association
  studies.
\newblock 2013.

\bibitem{Joyce_Marjoram_08}
P.~Joyce and P.~Marjoram.
\newblock Approximately sufficient statistics and {Bayesian} computation.
\newblock {\em Stat. Appl. Genet. Molec. Biol.}, 7(1):1544--6115, 2008.

\bibitem{TB2019}
J~Lintusaari, P~Blomstedt, B~Rose, T~Sivula, MU~Gutmann, S~Kaski, and
  J~Corander.
\newblock Resolving outbreak dynamics using approximate bayesian computation
  for stochastic birth?death models [version 2; peer review: 2 approved].
\newblock {\em Wellcome Open Research}, 4(14), 2019.

\bibitem{SVT_c1}
Min Lyu, Dong Su, and Ninghui Li.
\newblock Understanding the sparse vector technique for differential privacy.
\newblock {\em Proc. VLDB Endow.}, 10(6):637–648, February 2017.

\bibitem{46029}
Ilya Mironov.
\newblock Rényi differential privacy.
\newblock In {\em Proceedings of 30th IEEE Computer Security Foundations
  Symposium (CSF)}, pages 263--275, 2017.

\bibitem{Nunes2010}
M.~Nunes and D.~Balding.
\newblock On optimal selection of summary statistics for approximate bayesian
  computation.
\newblock {\em Stat. Appl. Genet. Molec. Biol.},
  9(1):doi:10.2202/1544--6115.1576, 2010.

\bibitem{ParJitSej2016}
Mijung Park, Wittawat Jitkrittum, and Dino Sejdinovic.
\newblock {K2-ABC}: Approximate {B}ayesian computation with infinite
  dimensional summary statistics via kernel embeddings.
\newblock In {\em AISTATS}, 2016.

\bibitem{rejABC}
JK~Pritchard, MT~Seielstad, A~Perez-Lezaun, and MW~Feldman.
\newblock Population growth of human y chromosomes: a study of y chromosome
  microsatellites.
\newblock {\em Molecular biology and evolution}, 16(12):1791—1798, December
  1999.

\bibitem{Ratmann_07}
O.~Ratmann, O.~J{\o}rgensen, T.~Hinkley, M.~Stumpf, S.~Richardson, and C.~Wiuf.
\newblock Using likelihood-free inference to compare evolutionary dynamics of
  the protein networks of {H.} pylori and {P.} falciparum.
\newblock {\em PLoS Computational Biology}, 3(11):e230, 11 2007.

\bibitem{Robert11}
C.~P. Robert, J.~Cornuet, J.~Marin, and N.~S. Pillai.
\newblock {Lack of confidence in approximate Bayesian computation model
  choice}.
\newblock {\em Proceedings of the National Academy of Sciences},
  108(37):15112--15117, 2011.

\bibitem{Schaferetal_12}
C.~M. Schafer and P.~E. Freeman.
\newblock Likelihood-free inference in cosmology: Potential for the estimation
  of luminosity functions.
\newblock In {\em Statistical Challenges in Modern Astronomy V}, pages 3--19.
  Springer, 2012.

\bibitem{Sriperumbudur2011}
B.~Sriperumbudur, K.~Fukumizu, and G.~Lanckriet.
\newblock {Universality, characteristic kernels and RKHS embedding of
  measures}.
\newblock {\em J. Mach. Learn. Res.}, 12:2389--2410, 2011.

\bibitem{Tanaka:2006aa}
Mark~M Tanaka, Andrew~R Francis, Fabio Luciani, and S~A Sisson.
\newblock Using approximate {Bayesian} computation to estimate tuberculosis
  transmission parameters from genotype data.
\newblock {\em Genetics}, 173(3):1511--1520, 07 2006.

\bibitem{Tavare1997}
S.~Tavar\'{e}, D.~J. Balding, R.~C. Griffiths, and P.~Donnelly.
\newblock Inferring coalescence times from {DNA} sequence data.
\newblock {\em Genetics}, 145(2):505--518, 1997.

\end{thebibliography}


\onecolumn
\appendix

\begin{center}
{\LARGE{}{}{}{}{}{}\ourtitle{}}
\par\end{center}

\begin{center}
\textcolor{black}{\Large{}{}{}{}{}{}Supplementary}{\Large{}{}{}{}{}
}
\par\end{center}

\setcounter{section}{0}

\section{PROOF OF THEOREM \thmref{abcdp_Lap}}
\label{supp:proof_abcdp_Lap}
We first recall Theorem \thmref{abcdp_Lap} stated in the main text:
\mrejdp*
\begin{proof}

\textbf{Case I: $RESAMPLE= True$.} We prove the case of $c=1$ first. The case of $c > 1$ is a $c$-composition of the case of $c=1$, where the privacy loss linearly increases with $c$.

Given any neighboring datasets $Y^*$ and $Y^{*'}$ of size $N$ and any dataset $Y$, assume that $\rho$ is such that $0<\sup_{(Y^*,Y^{*'}),Y} \mid \rho (Y^*,Y) - \rho(Y^{*'},Y) \mid < \Delta_{\rho} < \infty$ and $\rho$ it's bounded by $B_{\rho}$.
\\
\\
Let $A$ denote the random variable that represents the outputs \algoref{ABCDP} given ($\{(\bm \theta_t, Y_t )\}_{t=1}^T$, $Y^{*}$, $\rho$, $\epsilon_{abc}$, $\epsilon$) and $A'$ the random variable that represents the outputs given ($\{(\bm \theta_t, Y_t )\}_{t=1}^T$, $Y^{*'}$, $\rho$, $\epsilon_{abc}$, $\epsilon$).
The output of the algorithm is some realization of these variables, $\tau \in \{1,0 \}^{k}$ where $0<k \leq T$ and for all $t<k$, $\tau_t = 0$ and $\tau_{k}= 1$.
For the rest of the proof we will fix the arbitrary values of $\nu_{1},...,\nu_{k-1}$ and take probabilities over the randomness of $\nu_{k}$ and $\epsilon_{abc}$. We define the deterministic quantity ($\nu_{1},...,\nu_{k-1}$ are fixed) :

\begin{equation}
  g(Y^*) = \min_{t<k} (\rho(Y_{t}, Y^{*}) + \nu_t)
\end{equation}

\vspace{5pt}
that represents the minimum noised value of the distance evaluated on any dataset $Y^*$.

Let $Pr[\hat{\epsilon}_{abc} = a]$ be the pdf of $\hat{\epsilon}_{abc}$ evaluated on $a$ and $Pr[\nu_{k}= v]$ the pdf of $\nu_{k}$ evaluated on $v$, and $\mathds{1}[x]$ the indicator function of event $x$. We have:

$$\text{Pr}_{\hat{e}_{abc}, \nu_{k}}[A=\tau_{k}] = \text{Pr}[\hat{\epsilon}_{abc} < g(Y^*)\text{ and } \rho(Y_{k}, Y^{*}) + \nu_k \leq \hat{\epsilon}_{abc}]= \text{Pr}[\hat{\epsilon}_{abc} \in [ \rho(Y_{k}, Y^{*}) + \nu_k, g(Y^*) )]$$

$$=\int_{- \infty}^{\infty}\int_{- \infty}^{\infty} \text{Pr}[\nu_{k}=v] \text{Pr}[\hat{\epsilon}_{abc} = a]\mathds{1}[a \in [\rho(Y_{k}, Y^{*}) + \nu_{k}, g(Y^*))] dv da$$

Now, we define the following variables:

$$\hat{a} = a + g(Y^*) - g(Y^{*'})$$

$$\hat{v} = v +  g(Y^*) - g(Y^{*'}) + \rho(Y_{k}, Y^{*'})- \rho(Y_{k}, Y^{*}) $$

We know that for each $Y^*, Y^{*'}$, $\rho$ is $\Delta_{\rho}$-sensitive and hence, the quantity $g(Y^*)$ is $\Delta_{\rho}$-sensitive as well. In this way we obtain that $ \mid \hat{a} - a \mid \leq \Delta_{\rho}$ and $\mid \hat{v} - v \mid \leq 2\Delta_{\rho}$. Applying these changes of variables we have:

$$ = \int_{- \infty}^{\infty}\int_{- \infty}^{\infty} \text{Pr}[\nu_{k}=\hat{v}] \text{Pr}[\hat{\epsilon}_{abc} = \hat{a}]\mathds{1}[a + g(Y^*) - g(Y^{*'}) \in [v + g(Y^*) - g(Y^{*'}) + \rho(Y_{k}, Y^{*'}), g(Y^*) )] dv da$$

$$= \int_{- \infty}^{\infty}\int_{- \infty}^{\infty} \text{Pr}[\nu_{k}=\hat{v}]\text{Pr}[\hat{\epsilon}_{abc} = \hat{a}] \mathds{1}[a \in [v + \rho(Y_{k}, Y^{*'}), g(Y^{*'}) )] dv da $$

$$\leq \int_{- \infty}^{\infty}\int_{- \infty}^{\infty} \exp{\Big(\frac{\epsilon}{2}\Big)} \text{Pr}[\nu_k = v] \exp{\Big(\frac{\epsilon}{2}\Big)} \text{Pr}[\hat{\epsilon}_{abc} = a]\mathds{1}[a \in [v + \rho(Y_{k}, Y^{*'}), g(Y^{*'}) )] dv da$$

$$\leq \exp{( \epsilon )} \int_{- \infty}^{\infty}\int_{- \infty}^{\infty} \text{Pr}[\nu_k = v] \text{Pr}[\hat{\epsilon}_{abc} = a]\mathds{1}[a \in [v + \rho(Y_{k}, Y^{*'})), g(Y^{*'}) )] dv da$$

$$= \exp{(\epsilon)} \text{Pr}[\hat{\epsilon}_{abc} < g(Y^{*'})\text{ and } \rho(Y_{k}, Y^{*'}) + \nu_k \leq \hat{\epsilon}_{abc}] = \exp{(\epsilon)} \text{Pr}_{\hat{e}_{abc}, \nu_{k}}[A^{'}=\tau_{k}]$$

where the inequality comes from the bounds considered throughout the proof  (i.e. $\mid \hat{a} - a \mid \leq \Delta_{\rho}$ and $\mid \hat{v} - v \mid \leq 2\Delta_{\rho}$)  and the form of the cdf for the Laplace distribution.

\textbf{Case II: $RESAMPLE= False$.}
In this case, the proof follows the proof of Algorithm 1 in \cite{SVT_c1}, with an exception that positive events for \cite{SVT_c1}
becomes negative events for us and vice versa as we find the value below a threshold, where \cite{SVT_c1} finds the value above a threshold.
$$$$
\end{proof}

\section{PROOF OF LEMMA \ref{lem:deltammd}}
\label{supp:proof_deltammd}
Recall Lemma \ref{lem:deltammd} stated in the main text:
\deltammd*

\begin{proof}
We will establish $\Delta_{\rho}$ when $\rho$ is MMD. Recall that
$(Y^*,Y^{*'})$ is a pair of neighbouring datasets, and $Y$ is an arbitrary
dataset. Without loss of generality, assume that $Y^*=\{x_{1},\ldots,x_{N}\}$,
$Y^{*'}=\{x_{1}',\ldots,x_{N}'\}$ such that $x_{i}=x_{i}'$ for all $i=1,\ldots,N-1$,
and $Y=\{y_{1},\ldots,y_{m}\}$.
 We start with
\begin{align*}
 & \sup_{(Y^*,Y^{*'}),Y}|\rho(Y^{*},Y)-\rho(Y^{*'},Y)|\\
 & =\sup_{(Y^*,Y^{*'}),Y}|\widehat{\mathrm{MMD}}(Y^{*},Y)-\widehat{\mathrm{MMD}}(Y^{*'},Y)|\\
 & =\sup_{(Y^*,Y^{*'}),Y}\bigg|\thinspace\bigg\|\frac{1}{N}\sum_{i=1}^{N}\phi(x_{i})-\frac{1}{m}\sum_{j=1}^{m}\phi(y_{j})\bigg\|_{\mathcal{H}}
 -\bigg\|\frac{1}{N}\sum_{i=1}^{N}\phi(x_{i}')-\frac{1}{m}\sum_{j=1}^{m}\phi(y_{j})\bigg\|_{\mathcal{H}}\thinspace\bigg|\\
 & \stackrel{(a)}{\le}\sup_{(X,X')}\bigg\|\frac{1}{N}\sum_{i=1}^{N}\phi(x_{i})-\frac{1}{N}\sum_{i=1}^{N}\phi(x_{i}')\bigg\|_{\mathcal{H}}\\
 & =\sup_{(x_{N},x_{N}')}\bigg\|\frac{1}{N}\phi(x_{N})-\frac{1}{N}\phi(x_{N}')\bigg\|_{\mathcal{H}}\\
 & =\sup_{(x_{N},x_{N}')}\frac{1}{N}\sqrt{k(x_{N},x_{N})+k(x_{N}',x_{N}')-2k(x_{N},x_{N}')}\\
 & \le\frac{2}{N}\sqrt{B_{k}},
\end{align*}
where at $(a)$ we use the reverse triangle inequality.
Furthermore,
\begin{align*}
 & \sup_{Y^*,Y}\rho(Y^*,Y)\\
 & \le\sup_{Y^*,Y}\sqrt{\bigg\|\frac{1}{N}\sum_{i=1}^{N}\phi(x_{i})-\frac{1}{m}\sum_{i=1}^{m}\phi(y_{i})\bigg\|_{\mathcal{H}}^{2}}\\
 & =\sup_{Y^*,Y}\sqrt{\frac{1}{N^{2}}\sum_{i,j=1}^{N}k(x_{i},x_{j})+\frac{1}{m^{2}}\sum_{i,j=1}^{m}k(y_{i},y_{j})-\frac{2}{mn}\sum_{i=1}^{N}\sum_{j=1}^{m}k(x_{i},y_{j})}\\
 & =\sqrt{B_{k}+B_{k}+2B_{k}}=2\sqrt{B_{k}}.
\end{align*}

\end{proof}

\section{PROOF OF \lemref{pdf_Z}}\label{supp:proof_pdf_z}

The PDF is computed from the convolution of two PDFs.
%
\begin{align}
    f_{m_t-\nu_t}(z) = \int_{-\infty}^\infty f_{m_t}(x) f_{\nu_t}(x-z) dx,
\end{align} where $f_{m_t}(x)=\frac{1}{2b}\exp(-\frac{|x|}{b})$ and $f_{\nu_t}(y) = \frac{1}{4b}\exp(-\frac{|y|}{2b})$.
\begin{align}
    f_{m_t-\nu_t}(z) &=
    \frac{1}{8b^2}\int_{-\infty}^\infty
    \exp\left( -\frac{|x|}{b} - \frac{|x-z|}{2b} \right) dx
\end{align}

\textbf{For case $z\geq0$:}
\begin{align}
    f_{m_t-\nu_t}(z) &=
    \frac{1}{8b^2}\int_{-\infty}^0
    \exp\left( \frac{x}{b}+ \frac{x-z}{2b} \right) dx
    + \frac{1}{8b^2}\int_{0}^z
    \exp\left( -\frac{x}{b} + \frac{x-z}{2b} \right) dx \nonumber\\
    & \quad + \frac{1}{8b^2}\int_{z}^\infty
    \exp\left( -\frac{x}{b} - \frac{x-z}{2b} \right) dx, \\
    &= \frac{1}{8b^2}\int_{-\infty}^0
    \exp\left(\frac{3x-z}{2b} \right) dx
        + \frac{1}{8b^2} \int_{0}^z
    \exp\left(\frac{-x-z}{2b} \right) dx \nonumber \\
    & \quad + \frac{1}{8b^2}\int_{z}^\infty
    \exp\left(\frac{-3x+z}{2b} \right) dx \\
    &= \frac{\exp\left(\frac{-z}{2b} \right)}{8b^2}\int_{-\infty}^0
    \exp\left(\frac{3x}{2b} \right) dx
    + \frac{ \exp\left(\frac{-z}{2b} \right)}{8b^2} \int_{0}^z
    \exp\left(\frac{-x}{2b} \right) dx \nonumber \\
     & \quad + \frac{ \exp\left(\frac{z}{2b} \right)}{8b^2}\int_{z}^\infty
    \exp\left(\frac{-3x}{2b} \right) dx \\
    &= \frac{\exp\left(\frac{-z}{2b} \right)}{8b^2}\frac{2b}{3}
    -\frac{ \exp\left(\frac{-z}{2b} \right)}{8b^2} 2b \left(\exp\left(\frac{-z}{2b} \right) - 1 \right)
    + \frac{ \exp\left(\frac{z}{2b} \right)}{8b^2}\frac{2b}{3} \exp\left(\frac{-3z}{2b} \right), \\
    &=\frac{1}{12b}\left[\exp\left(\frac{-z}{2b} \right) + 3\exp\left(\frac{-z}{2b}\right) \left(1- \exp\left(\frac{-z}{2b} \right)\right)
    + \exp\left(\frac{-z}{b} \right)
    \right], \\
    &=\frac{1}{12b}\left[ 4\exp\left(\frac{-z}{2b}\right)
    -2\exp\left(\frac{-z}{b} \right)
    \right], \\
    &= \frac{1}{6b}\left[ 2\exp\left(\frac{-z}{2b}\right)
    -\exp\left(\frac{-z}{b} \right)
    \right]
\end{align}

\textbf{For case $z<0$:}
\begin{align}
    f_{m_t-\nu_t}(z) &=
    \frac{1}{8b^2}\int_{-\infty}^z
    \exp\left( \frac{x}{b}+ \frac{x-z}{2b} \right) dx
    + \frac{1}{8b^2}\int_{z}^0
    \exp\left( \frac{x}{b} - \frac{x-z}{2b} \right) dx \nonumber\\
    & \quad + \frac{1}{8b^2}\int_{0}^\infty
    \exp\left( -\frac{x}{b} - \frac{x-z}{2b} \right) dx, \\
    &= \frac{1}{8b^2}\int_{-\infty}^z
    \exp\left(\frac{3x-z}{2b} \right) dx
        + \frac{1}{8b^2} \int_{z}^0
    \exp\left(\frac{x+z}{2b} \right) dx \nonumber \\
    & \quad + \frac{1}{8b^2}\int_{0}^\infty
    \exp\left(\frac{-3x+z}{2b} \right) dx \\
    &= \frac{\exp\left(\frac{-z}{2b} \right)}{8b^2}\int_{-\infty}^z
    \exp\left(\frac{3x}{2b} \right) dx
    + \frac{ \exp\left(\frac{z}{2b} \right)}{8b^2} \int_{z}^0
    \exp\left(\frac{x}{2b} \right) dx \nonumber \\
     & \quad + \frac{ \exp\left(\frac{z}{2b} \right)}{8b^2}\int_{0}^\infty
    \exp\left(\frac{-3x}{2b} \right) dx \\
    &= \frac{\exp\left(\frac{-z}{2b} \right)}{8b^2}\frac{2b}{3} \exp\left(\frac{3z}{2b} \right)
    +\frac{ \exp\left(\frac{z}{2b} \right)}{8b^2} 2b \left(1- \exp\left(\frac{z}{2b} \right) \right)
    + \frac{ \exp\left(\frac{z}{2b} \right)}{8b^2}\frac{2b}{3}, \\
    &=\frac{1}{12b}\left[\exp\left(\frac{z}{b} \right) - 3\exp\left(\frac{z}{2b}\right) \left(\exp\left(\frac{z}{2b} \right)-1\right)
    + \exp\left(\frac{z}{2b} \right)
    \right], \\
    &=\frac{1}{12b}\left[-2\exp\left(\frac{z}{b} \right) +4 \exp\left(\frac{z}{2b}\right)
    \right], \\
    &=\frac{1}{6b}\left[2\exp\left(\frac{z}{2b} \right) - \exp\left(\frac{z}{b}\right)
    \right].
\end{align}

With the obtained PDF $f_Z(z) = \frac{1}{6b}\left[2\exp\left(-\frac{|z|}{2b}
\right) - \exp\left(-\frac{|z|}{b}\right) \right]. $ for $Z:=m_t - \nu_t$, it
is straightforward to compute
$G_{b}(a):=\int_{a}^{\infty}f_{Z}(z)\thinspace\mathrm{d}z=\frac{1}{6}\left[4\exp\left(-\frac{a}{2b}\right)-\exp\left(-\frac{a}{b}\right)\right]$ for $a \ge 0$.
In other words, $G_b(a) = 1-F_Z(a)$ for $a \ge 0$ where $F_Z$ denotes the CDF of $Z$.

To show that the CDF of $Z$ is $F_Z(a) = H[a] + (1-2H[a])G_b(|a|)$ where
$H[a]$ is the Heaviside step function, we note that the density $f_Z(z)$ is
an even function i.e., $f_Z(z) = f_Z(-z)$ for any $z$. It follows that
if $a < 0$, $1-F_z(a) = 1-G_b(-a)$. That is,

\begin{equation*}
1-F_{Z}(a)	=\begin{cases}
G_{b}(a) & \text{if }a\ge0,\\
1-G_{b}(-a) & \text{if }a<0,
\end{cases}
\end{equation*}
or equivalently
\begin{equation*}
F_{Z}(a)	=\begin{cases}
1-G_{b}(a) & \text{if }a\ge0,\\
G_{b}(-a) & \text{if }a<0.
\end{cases}
\end{equation*}
More concisely,
\begin{align*}
F_{Z}(a)	&=(1-G_{b}(|a|))\mathbb{I}[a\ge0]+G_{b}(|a|)\mathbb{I}[a<0] \\
	&=\mathbb{I}[a\ge0]+\left(\mathbb{I}[a<0]-\mathbb{I}[a\ge0]\right)G_{b}(|a|)  \\
  &\stackrel{(a)}{=}H[a]+(1-2H[a])G_{b}(|a|),
\end{align*}
where at (a) we use
  $\left(\mathbb{I}[a<0]-\mathbb{I}[a\ge0]\right)=(1-2H[a])$.


\section{PROOF OF \propref{flip_prob}}\label{supp:proof_flip_prob}

Using this pdf above, we can compute the probabilities:
\begin{align}
    &\mbox{Pr}[\tilde\tau_t=1 | \tau_t=0]\\
    &= \mbox{Pr}[0 \le \rho_t -\epsilon_{abc}  \leq  Z],  \\
    &= \int_{\rho_t -\epsilon_{abc}}^\infty f(z) dz, \quad \mbox{where }  \rho_t - \epsilon_{abc} \ge 0 \\
    &=\int_{\rho_t -\epsilon_{abc}}^\infty \frac{1}{6b}\left[2\exp\left(-\frac{|z|}{2b} \right) - \exp\left(-\frac{|z|}{b}\right) \right] dz, \mbox{ by definition of $f(z)$} \\
    &=\int_{\rho_t -\epsilon_{abc}}^\infty \frac{1}{6b}\left[2\exp\left(-\frac{z}{2b} \right) - \exp\left(-\frac{z}{b}\right)
    \right] dz, \quad \mbox{ because }  \rho_t - \epsilon_{abc} \ge 0  \\
    &= \frac{1}{6b}\left[ 4b \exp\left(-\frac{\rho_t-\epsilon_{abc}}{2b} \right) - b \exp\left(-\frac{\rho_t-\epsilon_{abc}}{b} \right) \right],  \\
    &=\frac{1}{6}\left[4\exp\left(-\frac{\rho_t-\epsilon_{abc}}{2b} \right) - \exp\left(-\frac{\rho_t-\epsilon_{abc}}{b} \right) \right], \mbox{where } \rho_t - \epsilon_{abc} \ge 0 ,
    \label{eq:cdf_abcdp_bernoulli}
\end{align} and
\begin{align}
    &\mbox{Pr}[\tilde\tau_t=0 | \tau_t = 1] \nonumber \\
    &= \mbox{Pr}[Z \leq \rho_t -\epsilon_{abc}  \leq  0],  \\
    &= \int_{-\infty}^{\rho_t -\epsilon_{abc}} f(z) dz, \mbox{ where }  \rho_t -\epsilon_{abc} \leq 0,  \\
    &=\int_{-\infty}^{\rho_t -\epsilon_{abc}}\frac{1}{6b}\left[2\exp\left(\frac{z}{2b} \right) - \exp\left(\frac{z}{b}\right)
    \right] dz, \\
    &= \frac{1}{6b}\left[4b\exp\left( \frac{\rho_t -\epsilon_{abc}}{2b} \right) - b \exp\left( \frac{\rho_t -\epsilon_{abc}}{b} \right)\right], \\
    &= \frac{1}{6}\left[4\exp\left( \frac{\rho_t -\epsilon_{abc}}{2b} \right) -  \exp\left( \frac{\rho_t -\epsilon_{abc}}{b} \right)\right], \mbox{ where } \rho_t - \epsilon_{abc} \le 0.
\end{align}

So,
\begin{align}
    &\mathbb{P}[\tilde\tau_t \neq \tau_t|\tau_t] = \begin{cases}
     \mathbb{P}[\tilde\tau_t=1|\tau_t=0], & \text{if } \rho_t \ge \epsilon_{abc} \\
    \mathbb{P}[\tilde\tau_t=0|\tau_t=1], & \text{otherwise}
\end{cases} \\
&= \begin{cases} \frac{1}{6}\left[4\exp\left(-\frac{\rho_t-\epsilon_{abc}}{2b} \right) - \exp\left(-\frac{\rho_t-\epsilon_{abc}}{b} \right) \right], \text{if } \rho_t\geq \epsilon_{abc} \nonumber \\
    \frac{1}{6}\left[4\exp\left(\frac{\rho_t-\epsilon_{abc}}{2b} \right) - \exp\left(\frac{\rho_t-\epsilon_{abc}}{b} \right) \right], \text{otherwise}. \nonumber
    \end{cases}
    \end{align}
The two cases can be combined with the use of an absolute value to give the
result.

\section{PROOF OF THEOREM \ref{thm:sparse_posterior_bound}}
\label{supp:proof_sparse_posteriior_bound}
\begin{proof}
Let $H(x)$ be the Heaviside step function. Recall from our algorithm
that each accepted sample $(\btheta,Y)$ is associated with two independent
noise realizations: $m_{t}\sim\mathrm{Lap}(b)$ (i.e., $\hat{\epsilon}_{abc}=\epsilon_{abc}+m_{t}$)
and $\nu_{t}\sim\mathrm{Lap}(2b)$ (added to $\rho(Y^{*},Y_{t}$)).
With this notation, we have $\tilde{\tau}_{t}=H[\epsilon_{abc}-\rho(Y_{t},Y^{*})+m_{t}-\nu_{t}]$
for $t=1,\ldots,T$. Similarly, $\tau{}_{t}:=H[\epsilon_{abc}-\rho(Y_{t},Y^{*})]$.
For brevity, we define $\rho_{t}:=\rho(Y_{t},Y^{*})$. It follows
that $\tilde{\tau}_{t}\sim\mathrm{Bernoulli}(p_{t})$ where $p_{t}:=\pr(m_{t}-\nu_{t}>\rho_{t}-\epsilon_{abc})=\pr(\tilde{\tau}=1)$.

\textbf{Proof of the first claim} We start by establishing an upper
bound for
\begin{align}
 & \bigg\|\frac{1}{c}\sum_{t=1}^{T}f(\btheta_{t})\tilde{\tau}_{t}-\frac{1}{c'}\sum_{t=1}^{T}f(\btheta_{t})\tau{}_{t}\bigg\|_{2}\nonumber \\
 & =\bigg\|\frac{1}{c}\sum_{t=1}^{T}f(\btheta_{t})\tilde{\tau}_{t}{\color{blue}-\frac{1}{c'}\sum_{t=1}^{T}f(\btheta_{t})\tilde{\tau}_{t}+\frac{1}{c'}\sum_{t=1}^{T}f(\btheta_{t})\tilde{\tau}_{t}}-\frac{1}{c'}\sum_{t=1}^{T}f(\btheta_{t})\tau{}_{t}\bigg\|_{2}\nonumber \\
 & \le\left|\frac{1}{c}-\frac{1}{c'}\right|\bigg\|\sum_{t=1}^{T}f(\btheta_{t})\tilde{\tau}_{t}\bigg\|+\frac{1}{c'}\bigg\|\sum_{t=1}^{T}f(\btheta_{t})\tilde{\tau}_{t}-\sum_{t=1}^{T}f(\btheta_{t})\tau{}_{t}\bigg\|\nonumber \\
 & =\frac{1}{c'}\left|c'-c\right|\frac{1}{c}\bigg\|\sum_{t=1}^{T}f(\btheta_{t})\tilde{\tau}_{t}\bigg\|+\frac{1}{c'}\bigg\|\sum_{t=1}^{T}f(\btheta_{t})(\tilde{\tau}_{t}-\tau_{t})\bigg\|\nonumber \\
 & \le\frac{1}{c'}\left|\sum_{t=1}^{T}\tau_{t}-\sum_{t=1}^{T}\tilde{\tau}_{t}\right|\frac{1}{c}\bigg\|\sum_{t=1}^{T}f(\btheta_{t})\tilde{\tau}_{t}\bigg\|+\frac{1}{c'}\sum_{t=1}^{T}\|f(\btheta_{t})\|_{2}|\tilde{\tau}_{t}-\tau_{t}|\nonumber \\
 & \le\frac{1}{c'}\sum_{t=1}^{T}|\tilde{\tau}_{t}-\tau_{t}|\frac{1}{c}\bigg\|\sum_{t=1}^{T}f(\btheta_{t})\tilde{\tau}_{t}\bigg\|+\frac{K_{T}}{c'}\sum_{t=1}^{T}|\tilde{\tau}_{t}-\tau_{t}|,\label{eq:e_decom1}
\end{align}
where $K_{T}:=\max_{t=1,\ldots,T}\|f(\boldsymbol{\theta}_{t})\|_{2}$.
Consider $\frac{1}{c}\bigg\|\sum_{t=1}^{T}f(\btheta_{t})\tilde{\tau}_{t}\bigg\|$.
We can show that it is bounded by $K_{T}$ by
\begin{align*}
\frac{1}{c}\bigg\|\sum_{t=1}^{T}f(\btheta_{t})\tilde{\tau}_{t}\bigg\| & \le\frac{1}{c}\sum_{t=1}^{T}\|f(\btheta_{t})\|_{2}\tilde{\tau}_{t}\le\frac{K_{T}}{c}\sum_{t=1}^{T}\tilde{\tau}_{t}=K_{T}.
\end{align*}
Combining this bound with (\ref{eq:e_decom1}), we have
\begin{align}
\bigg\|\frac{1}{c}\sum_{t=1}^{T}f(\btheta_{t})\tilde{\tau}_{t}-\frac{1}{c'}\sum_{t=1}^{T}f(\btheta_{t})\tau{}_{t}\bigg\|_{2} & \le\frac{2K_{T}}{c'}\sum_{t=1}^{T}|\tilde{\tau}_{t}-\tau_{t}|\label{eq:cs_inq_before_poibin}
\end{align}
We will need to characterize the distribution of $|\tilde{\tau}_{t}-\tau_{t}|$.
Let $Z_{t}:=m_{t}-\nu_{t}$. By Lemma \ref{lemma:pdf_Z}, we have
\begin{align*}
p_{t}=\pr(\tilde{\tau}_{t}=1) & =\pr(Z_{t}>\rho_{t}-\epsilon_{abc})=1-F_{Z}(\rho_{t}-\epsilon_{abc})\\
 & =1-H[\rho_{t}-\epsilon_{abc}]+(2H[\rho_{t}-\epsilon_{abc}]-1)G_{b}(|\rho_{t}-\epsilon_{abc}|)\\
 & =\tau_{t}+(1-2\tau_{t})G_{b}(|\rho_{t}-\epsilon_{abc}|),
\end{align*}
where the decreasing function $G_{b}(x)\in(0,\frac{1}{2}]$ for any
$x\ge0$ is defined in Lemma \ref{lemma:pdf_Z}. We observe that $|\tilde{\tau}_{t}-\tau_{t}|\sim\mathrm{Bernoulli}(q_{t})$
where $q_{t}:=\pr(\tilde{\tau}_{t}\neq\tau_{t})=(1-p_{t})\tau_{t}+p_{t}(1-\tau_{t})$.
We can rewrite $q_{t}$ as
\begin{align*}
q_{t} & =\tau_{t}+p_{t}(1-2\tau_{t})\\
 & =\tau_{t}+\left[\tau_{t}+(1-2\tau_{t})G_{b}(|\rho_{t}-\epsilon_{abc}|)\right](1-2\tau_{t})\\
 & =G_{b}(|\rho_{t}-\epsilon_{abc}|).
\end{align*}
To prove the first claim, we take the expectation on both sides of
(\ref{eq:cs_inq_before_poibin}):
\begin{align*}
\mathbb{E}_{\tilde{\tau}_{1},\ldots,\tilde{\tau}_{T}}\bigg\|\frac{1}{c}\sum_{t=1}^{T}f(\btheta_{t})\tilde{\tau}_{t}-\frac{1}{c'}\sum_{t=1}^{T}f(\btheta_{t})\tau{}_{t}\bigg\|_{2} & \le\frac{2K_{T}}{c'}\mathbb{E}_{\tilde{\tau}_{t}}\left[\sum_{t=1}^{T}|\tilde{\tau}_{t}-\tau_{t}|\right]\\
 & =\frac{2K_{T}}{c'}\mu_{T},
\end{align*}
where $\mu_{T}=\mathbb{E}_{\tilde{\tau}_{t}}\left[\sum_{t=1}^{T}|\tilde{\tau}_{t}-\tau_{t}|\right]=\sum_{t=1}^{T}G_{b}(|\rho_{t}-\epsilon_{abc}|)$
and we use the fact that $\mathbb{E}_{\tilde{\tau}_{t}}|\tilde{\tau}_{t}-\tau_{t}|=q_{t}$.
Note that these are $T$ independent, marginal expectations i.e.,
not conditioning on noise added to the ABC threshold.

\textbf{Proof of the second claim} Observe that $G_{b}(|\rho_{t}-\epsilon_{abc}|)\to0$
as $b\to0$. The claim follows by noting that $b=O(1/N)$.

\textbf{Proof of the third claim} Based on (\ref{eq:cs_inq_before_poibin}),
characterizing the tail bound of $\bigg\|\frac{1}{c}\sum_{t=1}^{T}f(\btheta_{t})\tilde{\tau}_{t}-\frac{1}{c'}\sum_{t=1}^{T}f(\btheta_{t})\tau{}_{t}\bigg\|_{2}$
amounts to establishing a tail bound on $S_{T}:=\sum_{t=1}^{T}|\tilde{\tau}_{t}-\tau_{t}|$.
By the Markov's inequality,
\begin{align*}
P(S_{T}\le s) & \ge1-\mathbb{E}[S_{T}]/s\\
 & =1-\frac{1}{s}\sum_{t=1}^{T}G_{b}(|\rho_{t}-\epsilon_{abc}|)\\
 & =1-\frac{1}{s}\sum_{t=1}^{T}\frac{1}{6}\left[4\exp\left(-\frac{|\rho_{t}-\epsilon_{abc}|}{2b}\right)-\exp\left(-\frac{|\rho_{t}-\epsilon_{abc}|}{b}\right)\right]\\
 & \ge1-\frac{2}{3s}\sum_{t=1}^{T}\exp\left(-\frac{|\rho_{t}-\epsilon_{abc}|}{2b}\right).
\end{align*}

Applying this bound to (\ref{eq:cs_inq_before_poibin}) gives
\[
P\left(\frac{2K_{T}}{c'}S_{T}\le\frac{2K_{T}}{c'}s\right)=P(S_{T}\le s).
\]
With a reparametrization $a:=\frac{2K_{T}}{c'}s$ so that $s=\frac{ac'}{2K_{T}},$we
have
\[
P\left(\frac{2K_{T}}{c'}S_{T}\le a\right)\ge1-\frac{4K_{T}}{3ac'}\sum_{t=1}^{T}\exp\left(-\frac{|\rho_{t}-\epsilon_{abc}|}{2b}\right),
\]
Since $\bigg\|\frac{1}{c}\sum_{t=1}^{T}f(\btheta_{t})\tilde{\tau}_{t}-\frac{1}{c'}\sum_{t=1}^{T}f(\btheta_{t})\tau{}_{t}\bigg\|_{2}\le\frac{2K_{T}}{c'}S_{T}$,
we have $P\bigg(\bigg\|\frac{1}{c}\sum_{t=1}^{T}f(\btheta_{t})\tilde{\tau}_{t}-\frac{1}{c'}\sum_{t=1}^{T}f(\btheta_{t})\tau{}_{t}\bigg\|_{2}\le a\bigg)\ge P\left(\frac{2K_{T}}{c'}S_{T}\le a\right)$.
which gives the result in the third claim.

\end{proof}

\end{document}